\declaretheorem{theorem}
\newcommand\xx{\mathbf{x}}
\newcommand\RR{\mathbb{R}}
\newcommand\hh{\mathbf{h}}
\newcommand\ww{\mathbf{w}}
\newcommand{\E}{{\mathbb{E}}}
\newcommand{\R}{{\mathbb{R}}}
\newcommand{\eps}{{\epsilon}}
\newcommand{\hx}{{\hat{x}}}
\newcommand{\red}[1]{\textcolor{red}{#1}}
\newcommand{\ns}[1]{\| #1 \|^2}
\newcommand{\n}[1]{\| #1 \|}
\newcommand{\tdo}{\widetilde{O}}
\newcommand{\tdomega}{\widetilde{\Omega}}
\newcommand\icu{\mu_{i\rightarrow}}
\newcommand{\nequiv}[0]{\not\equiv}
\newcommand{\ba}[1]{\left[ {#1} \right]}
\newcommand{\fl}[1]{\left\lfloor {#1}\right\rfloor}
\newcommand{\pa}[1]{\left( {#1} \right)}
\newcommand{\ve}[1]{\left\Vert {#1}\right\Vert}
\newcommand{\rc}[1]{\frac{1}{#1}}
\newcommand{\fc}[2]{\frac{#1}{#2}}
\newcommand{\pf}[2]{\pa{\frac{#1}{#2}}}
\newcommand{\mattg}[4]{
\left[\begin{array}{c|c}
#1 & #2 \\
\hline
#3 & #4
\end{array}\right]
}
\newcommand{\ctwog}[2]{
\left[\begin{array}{c}
#1 \\
\hline
#2
\end{array}\right]
}
\newcommand{\rtwog}[2]{
\left[\begin{array}{c|c}
#1 & #2 \\
\end{array}\right]
}
\newcommand{\prodo}[2]{\prod_{#1=1}^{#2}}
\newcommand{\vocab}[1]{\textbf{#1}}
\par\vspace{4mm}}
\newenvironment{proofof}[1]{\smallskip\noindent{\bf Proof of #1.}}%
        {\hspace*{\fill}$\Box$\par}
\DeclareMathOperator{\spn}{span}
\newtheorem{lemma}{Lemma}
\newtheorem{definition}{Definition}
\newtheorem{remark}{Remark}
\newcommand{\rcknote}[1]{\noindent{\red{RCK: #1 }}}
\title{Explaining Landscape Connectivity of Low-cost Solutions for Multilayer Nets}
\author{Rohith Kuditipudi \\ Duke University \\ \texttt{rohith.kuditipudi@duke.edu}  \and Xiang Wang \\ Duke University \\ \texttt{xwang@cs.duke.edu} \and Holden Lee \\ Princeton University \\ \texttt{holdenl@princeton.edu} \and Yi Zhang \\ Princeton University \\ \texttt{y.zhang@cs.princeton.edu} \and Zhiyuan Li \\ Princeton University \\ \texttt{zhiyuanli@cs.princeton.edu} \and Wei Hu \\ Princeton University \\ \texttt{huwei@cs.princeton.edu} \and Sanjeev Arora \\ Princeton University and Institute for Advanced Study \\ \texttt{arora@cs.princeton.edu} \and Rong Ge \\ Duke University \\ \texttt{rongge@cs.duke.edu}}
\date{}
\begin{document}

\maketitle

\begin{abstract}
{\em Mode connectivity}~\citep{garipov2018loss,draxler2018essentially} is a surprising phenomenon in the loss landscape of deep nets. Optima---at least those discovered by gradient-based optimization---turn out to be connected by simple paths on which the loss function is almost constant. Often, these paths can be chosen to be piece-wise linear, with as few as two segments. 

We give mathematical explanations for this phenomenon, assuming generic properties (such as dropout stability and noise stability) of well-trained deep nets, which have previously been identified as part of understanding the generalization properties of deep nets. Our explanation holds for realistic multilayer nets, and experiments are presented to verify the theory. 
\end{abstract}
\section{Introduction}
Efforts to understand how and why deep learning works have led to a focus on the {\em optimization landscape} of the training loss. Since optimization to near-zero training loss occurs for many choices of random initialization,  it is clear that the landscape contains many global optima (or near-optima). However, the loss can become quite high when interpolating between found optima, suggesting that these optima occur at the bottom of \textquotedblleft valleys\textquotedblright\ surrounded on all sides by high walls.  Therefore the phenomenon of {\em mode connectivity}~\citep{garipov2018loss,draxler2018essentially} came as a surprise: 
optima (at least the ones discovered by gradient-based optimization) are connected by simple paths in the parameter space, on which the loss function is almost constant. In other words, the optima are not walled off in separate valleys as hitherto believed. More surprisingly,  the paths connecting discovered optima can be piece-wise linear with as few as two segments. 

Mode connectivity begs for theoretical explanation. One paper~\citep{freeman2016topology} attempted such an explanation for $2$-layer nets, even before the discovery of the phenomenon in multilayer nets. However, they require the width of the net to be exponential in some relevant parameters. Others \citep{venturi2018spurious, liang2018understanding, nguyen2018loss, nguyen2019connected} require special structure in their networks where the number of neurons needs to be greater than the number of training data points. Thus it remains an open problem to explain mode connectivity even in the 2-layer case with realistic parameter settings, let alone for standard multilayer architectures.

At first sight, finding a mathematical explanation of the mode connectivity phenomenon for multilayer nets---e.g., for a $50$-layer ResNet on ImageNet---appears very challenging. However, the glimmer of hope is that since the phenomenon exists for a variety of architectures and datasets, it must arise from some generic property of trained nets. The fact that the connecting paths between optima can have as few as two linear segments further bolsters this hope.

Strictly speaking, empirical findings such as in~\citep{garipov2018loss,draxler2018essentially} do not show connectivity 
between {\em all} optima, but only for {\em typical} optima discovered by gradient-based optimization.  It seems an open question whether  connectivity holds for all optima in overparametrized nets. 
Section~\ref{sec:example:main} answers this question,  via a simple example of an
overparametrized two-layer net, not all of whose optima are connected via low-cost paths.

Thus to explain mode connectivity one must seek generic properties that hold for optima obtained via gradient-based optimization on realistic data. A body of work that could be a potential source of such generic properties is the ongoing effort to understand the generalization puzzle of over-parametrized nets---specifically, to understand the \textquotedblleft true model capacity\textquotedblright. For example, \citet{morcos2018importance} note that networks that generalize are insensitive to linear restrictions in the parameter space. \citet{arora2018stronger} define a {\em noise stability} property of deep nets, whereby adding Gaussian noise to the output of a layer is found to have minimal effect on the vector computed at subsequent layers. Such properties seem to arise in a variety of architectures purely from gradient-based optimization, without any explicit noise-injection during training---though of course using small-batch gradient estimates is an implicit source of noise-injection. (Sometimes training also explicitly injects noise, e.g. dropout or batch-normalization, but that is not needed for noise stability to emerge.)

Since resilience to perturbations arises in a variety of architectures, such resilience counts as a \textquotedblleft generic\textquotedblright\  property for which
it is natural to prove mode connectivity as a consequence. We carry this out in the current paper. 
Note that our goal here is not to explain every known detail of mode connectivity, but rather to give a plausible first-cut explanation.

First, in Section~\ref{sec:dropout} we explain mode connectivity by assuming the network is trained via dropout. In fact, the desired property is weaker: so long as there {\em exists} even a {\em single} dropout pattern that keeps the training loss close to optimal on the two solutions, our proof constructs a piece-wise linear path between them. 
The number of linear segments grows linearly with the depth of the net.  

Then, in Section~\ref{sec:noise_stable} we make a stronger assumption of noise stability along the lines of \cite{arora2018stronger} and show that it implies mode connectivity using paths with $10$ linear segments. While this assumption is strong, it appears to be close to what is satisfied in practice. (Of course, one could explicitly train deep nets to satisfy the needed noise stability assumption, and the theory applies directly to them.)

\subsection{Related work}
The landscape of the loss function for training neural networks has received a lot of attention. \citet{dauphin2014identifying, choromanska2015loss} conjectured that local minima of multi-layer neural networks have similar loss function values, and proved the result in idealized settings. For linear networks, it is known \citep{kawaguchi2016deep} that all local minima are also globally optimal. 

Several theoretical works have explored whether a neural network has spurious valleys (non-global minima that are surrounded by other points with higher loss). \citet{freeman2016topology} showed that for a two-layer net, if it is sufficiently overparametrized then all the local minimizers are (approximately) connected. However, in order to guarantee a small loss along the path they need the number of neurons to be exponential in the number of input dimensions.
\citet{venturi2018spurious} proved that if the number of neurons is larger than either the number of training samples or the intrinsic dimension (infinite for standard architectures), then the neural network cannot have spurious valleys. \citet{liang2018understanding} proved similar results for the binary classification setting. \citet{nguyen2018loss, nguyen2019connected} relaxed the requirement on overparametrization, but still require the output layer to have more direct connections than the number of training samples. 

Some other papers have studied the existence of spurious local minima. \citet{yun2018critical} showed that in most cases neural networks have spurious local minima. Note that a local minimum need only have loss no larger than the points in its neighborhood, so a local minimum is not necessarily a spurious valley. \citet{safran2018spurious} found spurious local minima for simple two-layer neural networks under a Gaussian input distribution. These spurious local minima are indeed spurious valleys as they have positive definite Hessian. 



\section{Preliminaries}


\paragraph{Notations} For a vector $v$, we use $\|v\|$ to denote its $\ell_2$ norm. For a matrix $A$, we use $\|A\|$ to denote its operator norm, and $\|A\|_F$ to denote its Frobenius norm. We use $[n]$ to denote the set $\{1,2,\ldots,n\}$. We use $I_n$ to denote the identity matrix in $\R^{n\times n}$. We use $O(\cdot), \Omega(\cdot)$ to hide constants and use $\tdo(\cdot),\tdomega(\cdot)$ to hide poly-logarithmic factors.

\paragraph{Neural network} In most of the paper, we consider fully connected neural networks with ReLU activations. Note however that our results can also be extended to convolutional neural networks (in particular, see Remark~\ref{rmk:channelwise} and the experiments in Section~\ref{sec:experiments}). 

Suppose the network has $d$ layers. Let the vector before activation at layer $i$ be $x^i$, $i\in[d],$ where $x^d$ is just the output. For convenience, we also denote the input $x$ as $x^0$. Let $A_i$ be the weight matrix at $i$-th layer, so that we have $x^{i} = A_{i} \phi(x^{i-1})$ for $2\leq i\leq d$ and $x^1=A_1x^0$. For any layer $i,\ 1\leq i\leq d,$ let the width of the layer be $h_i$. We use $[A_i]_j$ to denote the $j$-th column of $A_i.$ Let the maximum width of the hidden layers be $h_{\max}:=\max\{h_1, h_2, \ldots, h_{d-1}\}$ and the minimum width of the hidden layers be $h_{\min}:=\min\{h_1, h_2, \ldots, h_{d-1}\}$. 

We use $\Theta$ to denote the set of parameters of neural network, and in our specific model, $\Theta =  \R^{h_1\times h_0} \times\R^{h_2\times h_1}\times  \cdots \times \R^{h_d\times h_{d-1}}$ which consists of all the weight matrices $\{A_i\}$'s. 

Throughout the paper, we use $f_\theta$, $\theta\in \Theta$ to denote the function that is computed by the neural network. For a data set $(x,y)\sim \mathcal{D}$, the loss is defined as $L_\mathcal{D}(f_\theta): = \E_{(x,y)\sim \mathcal{D}}[l(y,f_\theta(x))]$ where $l$ is a loss function. The loss function $l(y, \hat{y})$ is convex in the second parameter. We omit the distribution $\mathcal{D}$ when it is clear from the context.

\paragraph{Mode connectivity and spurious valleys} Fixing a neural network architecture, a data set $\mathcal{D}$ and a loss function, we say two sets of parameters/solutions $\theta^A$ and $\theta^B$ are \vocab{$\epsilon$-connected} if there is a path $\pi(t): \R \to \Theta$ that is continuous with respect to $t$ and satisfies: 1. $\pi(0) = \theta^A$; 2.  $\pi(1) = \theta^B$ and 3. for any $t\in [0,1]$, $L(f_{\pi(t)}) \le \max\{L(f_{\theta^A}), L(f_{\theta^B})\}+\epsilon$. If $\epsilon = 0$, we omit $\epsilon$ and just say they are connected.

If all local minimizers are connected, then we say that the loss function has the \vocab{mode connectivity property}. However, as we later show in Section~\ref{sec:example:main}, this property is very strong and is not true even for overparametrized two-layer nets. Therefore we restrict our attention to classes of low-cost solutions that can be found by the gradient-based algorithms (in particular in Section~\ref{sec:dropout} we focus on solutions that are dropout stable, and in Section~\ref{sec:noise_stable} we focus on solutions that are noise stable). We say the loss function has $\epsilon$-\vocab{mode connectivity property} with respect to a class of low-cost solutions $\mathcal{C}$, if any two minimizers in $\mathcal{C}$ are $\epsilon$-connected.




Mode connectivity is closely related to the notion of spurious valleys and connected sublevel sets \citep{venturi2018spurious}. 
If a loss function has all its sublevel sets ($\{\theta:L(f_\theta)\le \lambda\}$) connected, then it has the mode connectivity property.
When the network only has the mode connectivity property with respect to a class of solutions $\mathcal{C}$, as long as the class $\mathcal{C}$ contains a global minimizer, we know there are no spurious valleys in $\mathcal{C}$. 


However, we emphasize that neither mode connectivity or lack of spurious valleys implies any local search algorithm can efficiently find the global minimizer. These notions only suggest that it is unlikely for local search algorithms to get completely stuck.
\definecolor{forestgreen}{rgb}{0.0, 0.5, 0.0}

\section{Connectivity of dropout-stable optima}
\label{sec:dropout}
In this section we show that \emph{dropout stable} solutions are connected. More concretely, we define a solution $\theta$ to be $\epsilon$-dropout stable if we can remove a subset of half its neurons in each layer such that the loss remains steady.



\begin{definition} \label{def:drop-stability}
(Dropout Stability) A solution $\theta$ is \vocab{$\epsilon$-dropout stable} if for all $i$ such that $1 \leq i < d$, 
there exists a subset of at most $\lfloor h_j/2 \rfloor$ hidden units
in each of the layers $j$ from $i$ through $d-1$ such that after rescaling the outputs of these hidden units (or equivalently, the corresponding rows and/or columns of the relevant weight matrices) by some factor $r$\footnote{Note our results will also work if $r$ is allowed to vary for each layer.} and setting the outputs of the remaining units to zero, we obtain a parameter $\theta_i$ such that $L(f_{\theta_i}) \leq L(f_\theta) + \epsilon$.
\end{definition}

Intuitively, if a solution is $\epsilon$-dropout stable then it is essentially only using half of the network's capacity. We show that such solutions are connected:

\begin{theorem}
\label{thm:connect_via_dropout}
Let $\theta^A$ and $\theta^B$ be two $\epsilon$-dropout stable solutions. Then there exists a path in parameter space $\pi : [0,1] \to \Theta$ between $\theta^A$ and $\theta^B$ such that $L(f_{\pi(t)}) \leq \max \{L(f_{\theta^A}),L(f_{\theta^B})\}+\epsilon$ for $0 \leq t \leq 1$.
In other words, letting $\mathcal C$ be the set of solutions that are $\epsilon$-dropout stable, a ReLU network has the $\epsilon$-mode connectivity property with respect to $\mathcal C$.
\end{theorem}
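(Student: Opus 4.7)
The plan is to build a piecewise-linear path from $\theta^A$ to $\theta^B$ of $O(d)$ segments, in three stages: (I) a path $\theta^A \to \theta^A_1$ that reduces $\theta^A$ to its fully dropout-reduced, half-width form; (II) a short path $\theta^A_1 \to \theta^B_1$ connecting the two half-width networks; and (III) the symmetric path $\theta^B_1 \to \theta^B$. Each stage will be shown to have loss along it bounded by $\max\{L(f_{\theta^A}), L(f_{\theta^B})\}+\epsilon$, using the fact that every $\theta^A_i$ (resp.\ $\theta^B_i$) has loss at most $L(f_{\theta^A})+\epsilon$ (resp.\ $L(f_{\theta^B})+\epsilon$) by $\epsilon$-dropout stability.

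Stage (II) is the key construction. Both $\theta^A_1$ and $\theta^B_1$ have at most $\lfloor h_j/2 \rfloor$ active neurons per hidden layer $j$; the remaining neurons are \emph{inert}, having zero outgoing weights. The starting observation is that an inert neuron's incoming weights are functionally irrelevant and can be modified along any linear path without changing $f_\theta$. Using this freedom I would first permute $\theta^A_1$'s active neurons into the top half of each hidden layer via a sequence of short function-preserving swaps---each swap exploits a paired active/inert neuron with identical inputs to transfer an outgoing weight from one to the other---and then build an intermediate $\theta^{AB}$ whose hidden weight matrices are block-diagonal, with $\theta^A_1$'s weights on the top--top blocks, $\theta^B_1$'s (suitably permuted) on the bottom--bottom blocks, zero off-diagonal, and $A_d$ having $\theta^A_1$'s top columns and zero bottom columns. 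Reaching $\theta^{AB}$ from the permuted $\theta^A_1$ only touches functionally irrelevant (inert) weights and so is function-preserving. Construct $\theta^{BA}$ symmetrically from $\theta^B_1$. Finally, linearly interpolate the columns of $A_d$ between $\theta^{AB}$ and $\theta^{BA}$: because the block-diagonal structure forces the two halves to propagate independently through the ReLUs, the output at time $t$ equals $(1-t)\,f_{\theta^A_1}(x) + t\,f_{\theta^B_1}(x)$, and convexity of $\ell(y,\cdot)$ in its prediction gives the desired loss bound.

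For Stage (I) (and symmetrically Stage (III)) I would pass through the cascade $\theta^A = \theta^A_d, \theta^A_{d-1}, \ldots, \theta^A_1$; by dropout stability each $\theta^A_i$ already lies within $\epsilon$ of $L(f_{\theta^A})$, so it suffices to connect consecutive members by a segment whose loss is bounded by the endpoint maximum. Each step modifies exactly one weight matrix $A_i$ (zeroing columns outside the kept subset $S_{i-1}$ and rescaling columns inside by $r$). For the last-layer step $\theta^A_d \to \theta^A_{d-1}$ the argument is clean: the linear interpolation of $A_d$ gives $f(x,t) = (1-t)\,f_{\theta^A}(x) + t\,f_{\theta^A_{d-1}}(x)$ since no ReLU intervenes above $A_d$, and convexity of $\ell$ finishes it.

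The main technical obstacle is the interior-layer steps $\theta^A_i \to \theta^A_{i-1}$ for $i < d$, where the ReLU above the modified layer generally obstructs the direct convex-combination-of-outputs argument. Here I would exploit the structural fact that in $\theta^A_i$ the subsequent matrices $A_{i+1}, \ldots, A_d$ already have zeroed columns outside the kept subsets, so only the active coordinates of layer $i$'s output propagate to the output, together with the positive homogeneity of ReLU (nonnegative diagonal scalings commute with $\phi$). These two properties together suggest that each interior step can still be completed with $O(1)$ linear pieces---for instance by routing through a function-equivalent row-scaled intermediate that plays the same ``inert-neuron detour'' role as in Stage (II)---yielding the overall $O(d)$ segment count the paper claims. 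This is the step requiring the most care to make rigorous.
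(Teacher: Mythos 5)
Your overall architecture is exactly the paper's: Stage (I)/(III) is Lemma~\ref{lem:original_sparse}, Stage (II) is Lemma~\ref{lem:drop-connect} (including the permutation-via-inert-neurons device, which is the paper's Lemma~\ref{lem:permutation}), and your block-diagonal intermediate with the $A_d$-interpolation and convexity of $\ell(y,\cdot)$ is precisely how the paper connects the two half-width networks. However, there is a genuine gap at the point you yourself flag: the interior-layer steps $\theta^A_i \to \theta^A_{i-1}$ of Stage (I). The two facts you invoke there---that the columns of $A_{i+1},\ldots,A_d$ outside the kept subsets are already zero, and positive homogeneity of ReLU---do not by themselves produce the step, and positive homogeneity is in fact not what the paper uses (the factor $r$ is simply baked into the definition of $\theta_i$ and is never commuted through $\phi$). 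The obstruction is real: the kept subset at layer $k-1$ changes between $\theta_k$ and $\theta_{k-1}$, the ReLU at layer $k$ sits between the modified matrix $A_k$ and the output, and a single ``function-equivalent row-scaled intermediate'' cannot bridge two configurations that compute genuinely different functions at layer $k$.

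The missing idea is the two-parallel-subnetworks detour of the paper's proof of Lemma~\ref{lem:original_sparse}. Starting from (the canonical form of) $\theta_k$, one first writes a complete copy of the \emph{target} configuration $\theta_{k-1}$ into the inert bottom halves of \emph{all} layers $k$ through $d-1$ simultaneously: set the bottom rows of $A_k$ to $\bigl[\,rL_k^\theta \mid 0\,\bigr]$ and make each $A_i$, $k<i<d$, block-diagonal with $rL_i^\theta$ in both diagonal blocks. This is function-preserving because those rows feed only into zero columns above. The network now contains two disjoint subnetworks computing $f_{\theta_k}$ (top) and $f_{\theta_{k-1}}$ (bottom), so linearly interpolating $A_d$ from $\bigl[\,rL_d^\theta\mid 0\,\bigr]$ to $\bigl[\,0\mid rL_d^\theta\,\bigr]$ yields output $(1-t)f_{\theta_k}(x)+tf_{\theta_{k-1}}(x)$; convexity of $\ell$ together with $L(f_{\theta_{k-1}})\le L(f_\theta)+\epsilon$ (dropout stability of the \emph{target}, not just the endpoints of a naive segment) bounds the loss. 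One then rewrites the now-inert top half to duplicate the bottom, interpolates $A_d$ back at no cost, and zeroes the bottom halves and the bottom rows of $A_{k-1}$, arriving at $\theta_{k-1}$ in canonical form for the next induction step. Without this explicit construction---which is the technical heart of Lemma~\ref{lem:original_sparse} and costs a constant number of segments per layer, giving the $O(d)$ total---your Stage (I) is an assertion rather than a proof.
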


Our path construction in Theorem~\ref{thm:connect_via_dropout} consists of two key steps. First we show that we can rescale at least half the hidden units in both $\theta^A$ and $\theta^B$ to zero via continuous paths of low loss, thus obtaining two parameters $\theta_1^A$ and $\theta_1^B$ satisfying the criteria in Definition~\ref{def:drop-stability}. 

\begin{lemma}\label{lem:original_sparse}
Let $\theta$ be an $\epsilon$-dropout stable solution and let $\theta_i$ be specified as in Definition \ref{def:drop-stability} for $1 \leq i < d$. Then there exists a path in parameter space $\pi: [0,1] \to \Theta$ between $\theta$ and $\theta_1$ passing through each $\theta_i$ such that $L(f_{\pi(t)}) \leq L(f_\theta) + \epsilon$ for $0 \leq t \leq 1$.
\end{lemma}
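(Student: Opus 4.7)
Plan: I would build $\pi$ as a concatenation of sub-paths passing through the intermediate dropout-stable states guaranteed by Definition~\ref{def:drop-stability},
\[
  \theta \;\to\; \theta_{d-1} \;\to\; \theta_{d-2} \;\to\; \cdots \;\to\; \theta_1,
\]
each with $L(f_{\theta_i}) \leq L(f_\theta)+\eps$. Each sub-path will be piecewise linear with loss controlled pointwise, so $\pi$ inherits the claimed bound.

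The first sub-path $\theta \to \theta_{d-1}$ only needs to change the output matrix $A_d$. Linearly interpolating $A_d(t) = (1-t)A_d + tA_d^{\mathrm{drop}}$ while keeping every other matrix fixed makes $\phi(x^{d-1})$ independent of $t$, so the output reduces to $f_{\pi(t)}(x) = A_d(t)\phi(x^{d-1}) = (1-t)f_\theta(x)+t f_{\theta_{d-1}}(x)$ pointwise. Convexity of $l(y,\cdot)$ in its second argument then gives $L(f_{\pi(t)}) \leq \max\{L(f_\theta),L(f_{\theta_{d-1}})\} \leq L(f_\theta)+\eps$.

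Each subsequent sub-path $\theta_{i+1}\to\theta_i$ must change $A_{i+1}$, but naive linear interpolation of $A_{i+1}$ fails because the ReLUs above $A_{i+1}$ make the output nonlinear in it. The remedy is a \emph{copy–swap–cleanup} construction that exploits the dead units already present in $\theta_{i+1}$: for every layer $j\in\{i+1,\ldots,d-1\}$ the units in $T_j := [h_j]\setminus S_j^{(i+1)}$ have their outputs killed by the zero columns of $A_{j+1}^{\mathrm{drop}}$, so the rows $A_j[T_j,:]$ can be edited without changing $f$. Choosing a size-$|S_j^{(i)}|$ subset $T_j' \subseteq T_j$ and a bijection $\sigma_j: S_j^{(i)}\to T_j'$ for each such $j$, I would proceed in three stages. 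First, in a sequence of function-preserving linear segments (ordered bottom-up), install ``mirror'' values in each $A_j[T_j',:]$ so that the chain $T_{i+1}'\to\cdots\to T_{d-1}'$ computes exactly what the surviving chain $S_{i+1}^{(i)}\to\cdots\to S_{d-1}^{(i)}$ of $\theta_i$ would compute. Second, linearly interpolate $A_d$ to transfer its nonzero columns from $S_{d-1}^{(i+1)}$ to $T_{d-1}'$ with the appropriate $r$-scaling; since $\phi(x^{d-1})$ is fixed and $A_d(t)$ is affine, the output is $(1-t)f_{\theta_{i+1}}(x)+tf_{\theta_i}(x)$ pointwise, so convexity of $l$ again bounds the loss. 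Third, a symmetric cleanup phase—now that the $S_j$ units for $j\geq i+1$ are the dead ones—rearranges the parameters into the exact form of $\theta_i$ via further function-preserving segments and one more $A_d$-interpolation.

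The main technical obstacle will be the bookkeeping for the mirror installation: one must process the layer edits in an order that keeps each row being modified genuinely dead at that moment, and must contend with the fact that the surviving subsets $S_j^{(i+1)}$ and $S_j^{(i)}$ from Definition~\ref{def:drop-stability} need not coincide (which is why $T_j'$ is only required to lie inside $T_j$ and be paired with $S_j^{(i)}$). Beyond this, the verifications reduce to two standard facts: positive homogeneity of ReLU (so that $A_j^{\mathrm{drop}}\phi(y) = A_j\phi(D_{j-1}y)$ for the diagonal mask $D_{j-1}$) and the pointwise convex-combination argument applied to each $A_d$-interpolation segment. Concatenating all the segments across all transitions produces a continuous, piecewise-linear $\pi$ with loss $\leq L(f_\theta)+\eps$ everywhere.
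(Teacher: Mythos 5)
Your plan is essentially the paper's own proof: the same chain $\theta\to\theta_{d-1}\to\cdots\to\theta_1$, with loss-changing segments confined to linear interpolations of $A_d$ (controlled by convexity of $l$ in the output, since the endpoints compute $f_{\theta_{i+1}}$ and $f_{\theta_i}$), and all other segments function-preserving edits of rows feeding into units whose outgoing columns are zero, followed by a copy-back/cleanup phase. The only substantive difference is that you track possibly distinct surviving subsets $S_j^{(i)}$ across different $i$ via explicit bijections, whereas the paper assumes WLOG that all subsets occupy the first $\lfloor h_j/2\rfloor$ indices; your bookkeeping is the more careful of the two.
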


Though na\"ively one might expect to be able to directly connect the weights of $\theta$ and $\theta_1$ via interpolation, such a path may incur high loss as the loss function is not convex over $\Theta$. In our proof of Lemma~\ref{lem:original_sparse}, we rely on a much more careful construction. The construction uses two types of steps: (a) interpolate between two weights in the top layer (the loss is convex in the top layer weights); (b) if a set of neurons already have their output weights set to zero, then we can change their input weights arbitrarily. See Figure~\ref{fig:path-ex} for an example path for a 3-layer network. Here we have separated the weight matrices into equally sized blocks: $A_3 = \rtwog{L_3}{R_3}$, $A_2 = \mattg{L_2}{C_2}{D_2}{R_2}$ and $A_1 = \ctwog{L_1}{B_1}$. The path consists of 6 steps alternating between type (a) and type (b). Note that for all the type (a) steps, we only update the top layer weights; for all the type (b) steps, we only change rows of a weight matrix (inputs to neurons) if the corresponding columns in the previous matrix (outputs of neurons) are already 0. In Section~\ref{sec:dropout_proof} we show how such a path can be generalized to any number of layers.
\begin{figure}[ht]
    \centering
    \begin{align*}
    \begin{matrix}
    & A_3 & A_2 & A_1 & \\
    (1) & \rtwog{L_3}{R_3}& \mattg{L_2}{C_2}{D_2}{R_2} & \ctwog{L_1}{B_1} & \\[1em]
    (2) & \rtwog{\color{red}rL_3}{\color{red}0}& \mattg{L_2}{C_2}{D_2}{R_2} & \ctwog{L_1}{B_1} & (a)\\[1em]
    (3) & \rtwog{rL_3}{\color{forestgreen}0}& \mattg{L_2}{C_2}{\color{red}rL_2}{\color{red}0} & \ctwog{L_1}{B_1} & (b)\\[1em]
    (4) & \rtwog{\color{red}0}{\color{red}rL_3}& \mattg{L_2}{C_2}{rL_2}{0} & \ctwog{L_1}{B_1} & (a)\\[1em]
    (5) & \rtwog{\color{forestgreen}0}{rL_3}& \mattg{\color{red}rL_2}{\color{red}0}{rL_2}{0} & \ctwog{L_1}{B_1} & (b)\\[1em]
    (6) & \rtwog{\color{red}rL_3}{\color{red}0}& \mattg{rL_2}{0}{rL_2}{0} & \ctwog{L_1}{B_1} & (a)\\[1em]
    (7) & \rtwog{rL_3}{\color{forestgreen}0}& \mattg{rL_2}{\color{forestgreen}0}{\color{red}0}{\color{forestgreen}0} & \ctwog{L_1}{\color{red}0} & (b)
    \end{matrix}
    \end{align*}
    
    \caption{Example path, 6 line segments from a 3-layer network to its dropout version. Red denotes weights that have changed between steps while green denotes the zeroed weights that allow us to make these changes without affecting our output.}
    \label{fig:path-ex}
\end{figure}

We then show that we can permute the hidden units of $\theta_1^A$ such that its non-zero units do not intersect with those of $\theta_1^B$, thus allowing us two interpolate between these two parameters. This is formalized in the following lemma and the proof is deferred to supplementary material.

\begin{lemma}\label{lem:drop-connect}
Let $\theta$ and $\theta'$ be two solutions such that at least $\lceil h_i/2 \rceil$ of the units in the $i^{th}$ hidden layer have been set to zero in both. Then there exists a path in parameter space $\pi: [0,1] \to \Theta$ between $\theta$ and $\theta'$ with 8 line segments such that $L(f_{\pi(t)}) \leq \max\{L(f_\theta),L(f_{\theta'})\}$.
\end{lemma}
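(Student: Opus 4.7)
The plan is to first continuously permute the hidden units of $\theta$ so that its active positions become disjoint in every hidden layer from those of $\theta'$, and then linearly interpolate in a way that keeps the output a genuine convex combination of $f_\theta$ and $f_{\theta'}$.

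For the permutation, write $S_i$ (resp.\ $S_i'$) for the positions in hidden layer $i$ at which $\theta$ (resp.\ $\theta'$) has a non-zero neuron. Since $|S_i|,|S_i'|\le\lfloor h_i/2\rfloor$, a pigeonhole argument produces permutations $\sigma_i$ of $[h_i]$ with $\sigma_i(S_i)\subseteq[h_i]\setminus S_i'$; writing $\tilde\theta$ for $\theta$ with hidden units relabelled by the $\sigma_i$, we have $f_{\tilde\theta}\equiv f_\theta$ as functions. To realize the relabelling via line segments I would iterate, for each swap from an active index $l$ to its target zero index $l'=\sigma_i(l)$, the three-step move illustrated in Figure~\ref{fig:path-ex}: (a) copy row $l$ of $A_i$ into row $l'$ (safe because column $l'$ of $A_{i+1}$ is zero, so the change is invisible to the forward pass); (b) linearly scale column $l$ of $A_{i+1}$ down to zero while simultaneously scaling column $l'$ up to the old value of column $l$ (the two neurons then share incoming weights, so their summed contribution is invariant); (c) zero out row $l$ of $A_i$, now safe since column $l$ of $A_{i+1}$ has been zeroed. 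Because swaps within a single layer act on disjoint indices, and swaps across different layers touch disjoint rows/columns of the shared matrices, all the (a)'s can be packed into one segment and likewise for the (b)'s and (c)'s.

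For the interpolation, once $\tilde\theta$ and $\theta'$ have disjoint active positions I would use three further segments. Phase~I: linearly interpolate $A_i$ from $A_i^{\tilde\theta}$ to $A_i^{\tilde\theta}+A_i^{\theta'}$ for all $i\ge 2$ simultaneously; the $\theta'$-block lives at positions where $\tilde\theta$'s forward pass is identically zero (and vice versa), so the function stays at $f_\theta$ throughout. Phase~II: linearly interpolate $A_1$ from $A_1^{\tilde\theta}$ to $A_1^{\theta'}$. At time $t$ the two disjoint branches propagate independently, and by positive homogeneity of ReLU the single scale factor $(1-t)$ (resp.\ $t$) introduced at layer~$1$ survives unchanged through each subsequent layer, so the network output equals exactly $(1-t)f_\theta(x)+tf_{\theta'}(x)$, and the loss bound follows from convexity of $l$ in its second argument. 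Phase~III: interpolate $A_i$ from $A_i^{\tilde\theta}+A_i^{\theta'}$ down to $A_i^{\theta'}$ for $i\ge 2$, which leaves $f_{\theta'}$ invariant by the same disjointness reasoning, and lands us exactly at $\theta'$.

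The main obstacle is avoiding a compounding of scale factors in Phase~II into something like $(1-t)^d f_\theta+t^d f_{\theta'}$, which is not a convex combination and would not be controlled by $\max\{L(f_\theta),L(f_{\theta'})\}$. The disjointness produced by the permutation is exactly what kills cross-terms between the two branches at every hidden layer, and positive homogeneity of ReLU then guarantees that the sole scale factor inserted at layer~$1$ is carried through the remaining layers unchanged rather than multiplying layer by layer. The remaining work is routine bookkeeping of which rows and columns are zero at each stage.
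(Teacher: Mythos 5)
Your construction is correct in substance but takes a genuinely different route from the paper's at the crucial ``crossing'' step. The paper keeps the two disjoint subnetworks' hidden weights fixed and interpolates only the \emph{output} layer $A_d$ between reading the $\theta$-branch and reading the $\theta'$-branch, invoking convexity of the loss as a function of the top-layer weights; you instead superimpose both branches in layers $2$ through $d$ and interpolate the \emph{input} layer $A_1$, using positive homogeneity of ReLU to show the single scale factors $(1-t)$ and $t$ propagate unchanged so that the output is literally $(1-t)f_\theta+tf_{\theta'}$, after which convexity of $l(y,\cdot)$ gives the bound. Both mechanisms are valid and both correctly identify that disjointness of the active units is what prevents cross-terms and compounding of scale factors. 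Your ordering is also reversed: the paper transplants $\theta'$'s weights into $\theta$'s free slots first and pays for a full $5$-segment permutation (Lemma~\ref{lem:permutation}) at the end to restore $\theta'$'s unit positions, whereas you permute $\theta$ out of the way up front. Because you only need to make the supports disjoint rather than realize a prescribed permutation, you can afford to relocate only the colliding units into slots that are free in \emph{both} networks, which is why your relay needs only $3$ segments instead of $5$ and your total is $6$ rather than $8$ --- still within the lemma's budget.

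One small repair is needed in your pigeonhole step. You ask only for $\sigma_i(S_i)\subseteq[h_i]\setminus S_i'$, but then describe each target $l'=\sigma_i(l)$ as a ``zero index,'' which your step (a) genuinely requires (column $l'$ of $A_{i+1}$ must be zero for the row copy to be invisible). An injection into $[h_i]\setminus S_i'$ may send an active unit of $\theta$ onto \emph{another} active unit of $\theta$, breaking the relay. The fix is to relocate only the units in $S_i\cap S_i'$ and map them injectively into $[h_i]\setminus(S_i\cup S_i')$; this set is large enough because $|S_i|+|S_i'|\le 2\lfloor h_i/2\rfloor\le h_i$ gives $|[h_i]\setminus(S_i\cup S_i')|\ge|S_i\cap S_i'|$. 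With that adjustment, and the observation (which the paper also uses in Lemma~\ref{lem:permutation}) that per-layer moves preserve each layer's output function and hence may be run simultaneously across layers, your argument goes through.
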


Theorem~\ref{thm:connect_via_dropout} follows immediately from Lemma~\ref{lem:original_sparse} and Lemma~\ref{lem:drop-connect}, as one can first connect $\theta^A$ to its dropout version $\theta^A_1$ using Lemma~\ref{lem:original_sparse}, then connect $\theta^A_1$ to dropout version $\theta^B_1$ of $\theta^B$ using Lemma~\ref{lem:drop-connect}, and finally connect $\theta^B_1$ to $\theta^B$ using Lemma~\ref{lem:original_sparse} again.

Finally, our results can be generalized to convolutional networks if we do {\em channel-wise} dropout~\citep{tompson2015efficient, keshari2018guided}.

\begin{remark} \label{rmk:channelwise} For convolutional networks, a {\em channel-wise} dropout will randomly set entire channels to 0 and rescale the remaining channels using an appropriate factor. Theorem~\ref{thm:connect_via_dropout} can be extended to work with channel-wise dropout on convolutional networks.
\end{remark}

\section{Connectivity via noise stability}\label{sec:noise_stable}

In this section, we relate mode connectivity to another notion of robustness for neural networks---noise stability. It has been observed \citep{morcos2018importance} that neural networks often perform as well even if a small amount of noise is injected into the hidden layers. This was formalized in \citep{arora2018stronger}, where the authors showed that noise stable networks tend to generalize well. In this section we use a very similar notion of noise stability, and show that all noise stable solutions can be connected as long as the network is sufficiently overparametrized.

We begin in Section~\ref{sec:noise_stable_model} by restating the definitions of noise stability in \citep{arora2018stronger} and also highlighting the key differences in our definitions. In Section~\ref{sec:experiments} we verify these assumptions in practice. In Section~\ref{sec:noise_stability_drop_out}, 
we first prove that noise stability implies dropout stability (meaning Theorem~\ref{thm:connect_via_dropout} applies) and then show that it is in fact possible to connect noise stable neural networks via even simpler paths than mere dropout stable networks. 


\subsection{Noise stability}\label{sec:noise_stable_model}




First we introduce some additional notations and assumptions. In this section, we consider a finite and fixed training set $S$. For a network parameter $\theta$, the empirical loss function is $L(\theta) = \frac{1}{|S|}\sum_{(x,y)\in S} l(y,f(x))$. Here the loss function $l(y,\hat{y})$ is assumed to be $\beta$-Lipschitz in $\hat{y}$: for any $\hat{y},\hat{y}'\in \R^{h_d}$ and any $y\in\R^{h_d},$ we have $|l(y,\hat{y})-l(y,\hat{y}')|\leq \beta\n{\hat{y}-\hat{y}'}.$
Note that the standard cross entropy loss over the softmax function is $\sqrt{2}$-Lipschitz. 

For any two layers $i\leq j$, let $M^{i,j}$ be the operator for the composition of these layers, such that $x^j = M^{i,j}(x^i)$. Let $J^{i,j}_{x^i}$ be the Jacobian of $M^{i,j}$ at input $x^i$. Since the activation functions are ReLU's, we know $M^{i,j}(x^i) =J^{i,j}_{x^i} x^i$.


\citet{arora2018stronger} used several quantities to define noise stability. We state the definitions of these quantities below. 
\begin{definition}[Noise Stability Quantities] \label{def:allnoise}\label{def:layercushion}
 Given a sample set $S$, the \vocab{layer cushion} of layer $i$ is defined as $\mu_i:=\min_{x\in S}\frac{\|A_i \phi(x^{i-1})\|}{\|A_i\|_F\|\phi(x^{i-1})\|}.$
\label{def:interlayercushion}
	
	For any two layers $i\leq j$, the \vocab{interlayer cushion} $\mu_{i,j}$ is defined as $\mu_{i,j}=\min_{x\in S}        \frac{\|J^{i,j}_{x^i}x^i\|}{\|J^{i,j}_{x^i}\|\|x^i\| }. 
	$
	
	Furthermore, for any layer $i$ the \vocab{minimal interlayer cushion} is defined as\footnote{Note that $J_{x^i}^{i,i} = I_{h_i}$ and $\mu_{i,i} = 1$.} $\icu = \min_{i\leq j\leq d} \mu_{i,j}.$
\label{def:activationcontraction}
	
	The \vocab{activation contraction} $c$ is defined as $c= \max_{x\in S,\ 1\leq i\leq d-1}  \frac{\|x^i\|}{\|\phi(x^i)\|}.$
	\end{definition}

Intuitively, these quantities measures the stability of the network's output to noise for both a single layer and across multiple layers. 
Note that the definition of the {\em interlayer cushion} is slightly different from the original definition in~\citep{arora2018stronger}. Specifically, in the denominator of our definition of interlayer cushion, we replace the Frobenius norm of $J^{i,j}_{x^i}$ by its spectral norm. In the original definition, the interlayer cushion is at most $1/\sqrt{h_i},$ simply because $J_{x^i}^{i,i} = I_{h_i}$ and $\mu_{i,i} = 1/\sqrt{h_i}.$ With this new definition,  the interlayer cushion need not depend on the layer width $h_i$. 




The final quantity of interest is interlayer smoothness, which measures how close the network's behavior is to its linear approximation under noise.
Our focus here is on the noise generated by the dropout procedure (Algorithm~\ref{alg:dropout}). Let $\theta = \{A_1, A_2, ...,A_d\}$ be weights of the original network, and let $\theta^i = \{A_1, \hat{A}_2,\ldots,\hat{A}_i,A_{i+1},\ldots,A_d\}$ be the result of applying Algorithm~\ref{alg:dropout} to weight matrices from layer $2$ to layer $i$.\footnote{Note that $A_1$ is excluded because dropping out columns in $\hat A_2$ already drops out the neurons in layer 1; dropping out columns in $A_1$ would drop out input coordinates, which is not necessary.} For any input $x$, let $\hat{x}^i_i(t)$ and $\hat{x}^i_{i-1}(t)$ be the vector before activation at layer $i$ using parameters $\theta t + \theta^i (1-t)$ and $\theta t + \theta^{i-1} (1-t)$ respectively.

\begin{definition}[Interlayer Smoothness]\label{def:interlayersmoothness}
 Given the scenario above, define \vocab{interlayer smoothness} $\rho$ to be the largest number such that with probability at least $1/2$ over the randomness in Algorithm~\ref{alg:dropout} for any two layers $i,j$ satisfying  for every $2\leq i\leq j\leq d$, $x\in S$, and $0\le t\le 1$ 
	\begin{align*}
	    &\|M^{i,j}(\hat{x}^i_i(t))-J_{x^i}^{i,j}(\hat{x}^i_i(t))\| \leq \frac{\|\hat{x}^i_i(t)-x^i\|\|x^{j}\|}{\rho \|x^{i}\|},\\ &\|M^{i,j}(\hat{x}^i_{i-1}(t))-J_{x^i}^{i,j}(\hat{x}^i_{i-1}(t))\| \leq \frac{\|\hat{x}^i_{i-1}(t)-x^i\|\|x^{j}\|}{\rho \|x^{i}\|}.
	\end{align*}
\end{definition}

If the network is smooth (has Lipschitz gradient), then interlayer smoothness holds as long as $\|\hat{x}^i_i(t)-x^i\|,\|\hat{x}^i_{i-1}(t)-x^i\|$ is small. Essentially the assumption here is that the network behaves smoothly in the random directions generated by randomly dropping out columns of the matrices.

Similar to \citep{arora2018stronger}, we have defined multiple quantities measuring the noise stability of a network. These quantities are in practice small constants as we verify experimentally in Section~\ref{sec:experiments}. Finally, we combine all these quantities to define a single overall measure of the noise stability of a network.

\begin{definition}[Noise Stability] For a network $\theta$ with layer cushion $\mu_i$, minimal interlayer cushion $\icu$, activation contraction $c$ and interlayer smoothness $\rho$, if the minimum width layer $h_{min}$ is at least $\tdomega(1)$ wide, $\rho \ge 3d$ and  $\n{\phi(\hat{x}^i_i(t))}_\infty=O(1/\sqrt{h_i})\n{\phi(\hat{x}^i_i(t))}$ for $1\leq i\leq d-1, 0\leq t\leq 1$, we say the network $\theta$ is $\epsilon$-noise stable for 
$$\epsilon=
\frac{\beta cd^{3/2}\max_{x\in S}(\n{f_{\theta}(x)})}{h_{\min}^{1/2}\min_{2\leq i\leq d}(\mu_i\icu)}.$$ 
\end{definition}

The smaller $\epsilon$, the more robust the network. 
Note that the quantity $\epsilon$ is small as long as the hidden layer width $h_{\min}$ is large compared to the noise stable parameters. Intuitively, we can think of $\epsilon$ as a single parameter that captures the noise stability of the network.

\subsection{Noise stability implies dropout stability}\label{sec:noise_stability_drop_out}
We now show that noise stable local minimizers must also be dropout stable, from which it follows that noise stable local minimizers are connected. We first define the dropout procedure we will be using in Algorithm~\ref{alg:dropout}.


\begin{algorithm}
	\caption{Dropout ($A_i, p$)}
	\begin{algorithmic}[1]
		\REQUIRE Layer matrix $A_i\in \R^{h_i\times h_{i-1}}$, dropout probability $0<p<1$.
		\ENSURE Returns $\hat{A}_i\in \R^{h_i\times h_{i-1}}$.
		\STATE For each $j\in[h_{i-1}],$ let $\delta_j$ be an i.i.d. Bernoulli random variable which takes the value $0$ with probability $p$ and takes the value $\frac{1}{1-p}$ with probability $(1-p)$. 
		\STATE For each $j\in[h_{i-1}],$ let $[\hat{A}_i]_j$ be $\delta_j [A_i]_j$, where $[\hat{A}_i]_j$ and $[A_i]_j$ are the $j$-th column of $\hat{A}_i$ and $A_i$ respectively.   
	\end{algorithmic}
	\label{alg:dropout}
\end{algorithm}


The main theorem that we prove in this section is:
\begin{restatable}{theorem}{noisestableconnect}
\label{thm:noise_stable_connect}
Let $\theta^A$ and $\theta^B$ be two fully connected networks that are both $\epsilon$-noise stable, 
there exists a path with $10$ line segments in parameter space $\pi : [0,1] \to \Theta$ between $\theta^A$ and $\theta^B$ such that\footnote{Here $\tdo(\cdot )$ hides log factors on relevant factors including $|S|, d, \n{x}, 1/\epsilon$ and $h_i \n{A_i}$ for layers $i\in[d]$.} $L(f_{\pi(t)}) \leq \max \{L(f_{\theta^A}),L(f_{\theta^B})\} + \tdo(\epsilon)$ for $0 \leq t \leq 1$.
\end{restatable}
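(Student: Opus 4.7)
The plan is to reduce the theorem to the dropout-connectivity machinery already built in Section~\ref{sec:dropout}, by showing that a noise-stable $\theta$ can be linearly interpolated to a randomly dropped-out copy $\theta_{\text{drop}}$ (obtained by running Algorithm~\ref{alg:dropout} on $A_2,\dots,A_d$ with $p=1/2$) without the loss ever rising by more than $\tdo(\epsilon)$. Once this is established, the total path uses three macroscopic pieces: (i) one segment from $\theta^A$ to some realization $\theta^A_{\text{drop}}$; (ii) at most $8$ segments from $\theta^A_{\text{drop}}$ to $\theta^B_{\text{drop}}$ via Lemma~\ref{lem:drop-connect} (applicable because Chernoff guarantees, with positive probability, at least $\lceil h_i/2\rceil$ zero columns in each $\hat A_i$ for both endpoints simultaneously); and (iii) one segment from $\theta^B_{\text{drop}}$ back to $\theta^B$. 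This accounts for $1+8+1=10$ line segments.

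The heart of the argument is step (i): bounding $\n{f_\theta(x)-f_{t\theta+(1-t)\theta_{\text{drop}}}(x)}$ uniformly in $t\in[0,1]$ and in $x\in S$, and then using the $\beta$-Lipschitzness of $l$ to pass to loss. I would proceed by a layer-by-layer hybrid argument using exactly the intermediate parameters $\theta^i=\{A_1,\hat A_2,\dots,\hat A_i,A_{i+1},\dots,A_d\}$ appearing in Definition~\ref{def:interlayersmoothness}. At layer $i$, the incoming perturbation $\n{\hat x^i_i(t)-x^i}$ is controlled by the layer cushion $\mu_i$ and activation contraction $c$: the randomness in $\hat A_i$ only kills/rescales columns, and the $\ell_\infty/\ell_2$ activation bound $\n{\phi(\hat x^i_i(t))}_\infty=O(1/\sqrt{h_i})\n{\phi(\hat x^i_i(t))}$ combined with Hoeffding/Bernstein gives concentration of $\hat A_i\phi(x^{i-1})$ around $A_i\phi(x^{i-1})$ with an $\tdo(1/\sqrt{h_{\min}})$ relative error. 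This layer-$i$ error is then propagated to the output via the Jacobian $J^{i,d}_{x^i}$, whose operator norm is controlled by the minimal interlayer cushion $\icu$; interlayer smoothness $\rho\ge 3d$ lets us replace the nonlinear operator $M^{i,d}$ with its linearization at a lower-order cost. Summing the $d$ contributions yields a bound of order $\frac{\beta c d^{3/2}\max_{x}\n{f_\theta(x)}}{h_{\min}^{1/2}\min_i(\mu_i\icu)}=\epsilon$ up to log factors absorbed into $\tdo(\cdot)$.

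The main obstacle is exactly the uniformity in $t$: standard noise-stability calculations in \citet{arora2018stronger} bound the error only at the single point $t=0$, whereas here the error must be controlled along the entire convex combination. This is precisely why Definition~\ref{def:interlayersmoothness} imposes its inequalities for both $\hat x^i_i(t)$ and $\hat x^i_{i-1}(t)$ at every $t\in[0,1]$: the Jacobian $J^{i,j}_{x^i}$ is anchored at the reference activation $x^i$ coming from $\theta$, and smoothness guarantees that $M^{i,j}$ evaluated at any perturbed intermediate activation stays within $\frac{\n{\text{perturbation}}\cdot\n{x^j}}{\rho\n{x^i}}$ of this linearization. Given this, telescoping the hybrid errors across layers is routine bookkeeping.

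Finally, to realize step (ii), note that the dropped-out endpoints $\theta^A_{\text{drop}}$ and $\theta^B_{\text{drop}}$ differ from the setup of Lemma~\ref{lem:drop-connect} only in that zero \emph{columns} (rather than arbitrary zeroed units in both incoming and outgoing weights) are produced; since dropping a column of $\hat A_{i+1}$ zeros out the effective output of the corresponding neuron in layer $i$, the hypothesis of Lemma~\ref{lem:drop-connect} is met after identifying these neurons. Concatenating (i), (ii), (iii) and invoking a union bound over the constantly many high-probability events in the proof yields the claimed $10$-segment path with loss excess $\tdo(\epsilon)$.
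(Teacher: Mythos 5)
Your overall architecture matches the paper's proof exactly: interpolate each endpoint linearly to a randomly dropped-out copy (one segment each, justified by the noise-stability machinery of Lemma~\ref{lem:dropout_crosslayer}), connect the two dropped-out networks via the $8$-segment construction of Lemma~\ref{lem:drop-connect}, and count $1+8+1=10$. Your sketch of the interpolation bound (hybrid argument over layers, cushions controlling the per-layer error, interlayer smoothness handling the linearization at every $t$) is also the right outline of Lemma~\ref{lem:dropout_crosslayer}, though you gloss over the fact that the per-$t$ bound only holds for each \emph{fixed} $t$ with constant probability, so uniformity over $t$ additionally requires the $\epsilon$-net-plus-Lipschitz argument the paper carries out.

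The one concrete gap is your choice of dropout probability $p=1/2$ and the accompanying probability bookkeeping. To invoke Lemma~\ref{lem:drop-connect} you need at least $\lceil h_i/2\rceil$ zeroed units in \emph{every} hidden layer of \emph{both} dropped-out networks, \emph{and} simultaneously the event that the loss stays within $\tdo(\epsilon)$ along both interpolation segments, which Lemma~\ref{lem:dropout_crosslayer} only guarantees with probability at least $1/4$ per network. With $p=1/2$ the number of zeroed columns in a layer is $\mathrm{Bin}(h_{i-1},1/2)$, so ``at least half are zero'' holds only with probability roughly $1/2$ per layer, hence roughly $2^{-\Omega(d)}$ over all layers of both networks; this is a positive- but not high-probability event, and it cannot be combined with the constant-probability loss event by a union bound (the events are also correlated through the same dropout randomness). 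This is precisely why the paper sets $p=3/4$: the Chernoff bound in Lemma~\ref{lem:dropout_layer} then guarantees that at least a $\tfrac{2}{3}p=\tfrac{1}{2}$ fraction of columns are zeroed with probability $1-\delta$, a genuinely high-probability event that survives the union bound with the probability-$\geq 1/4$ loss event. Replacing $p=1/2$ by $p=3/4$ (and citing the $\tfrac{2}{3}p$ guarantee rather than the median of the binomial) repairs the argument and recovers the paper's proof.
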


To prove the theorem, we will first show that the networks $\theta^A$ and $\theta^B$ are $\tdo(\epsilon)$-dropout stable. This is captured in the following main lemma:

\begin{restatable}{lemma}{dropoutcrosslayer}
\label{lem:dropout_crosslayer}
Let $\theta$ be an $\epsilon$-noise stable network, 
and let $\theta_1$ be the network with weight matrices from layer $2$ to layer $d$ dropped out by Algorithm~\ref{alg:dropout} with dropout probability $\tdomega(1/h_{min})<p\le \fc 34$.
For any $2\leq i\leq d$, assume $\n{[A_i]_j}=O(\sqrt{p})\n{A_i}_F$ for $1\leq j\leq h_{i-1}.$ For any $0\leq t\leq 1$, define the network on the segment from $\theta$ to $\theta_1$ as $\theta_t:=\theta+t(\theta_1-\theta)$. 
Then, with probability at least $1/4$ over the weights generated by Algorithm~\ref{alg:dropout}, $L(f_{\theta_t}) \le L(f_{\theta})+\tdo(\sqrt{p}\epsilon)$, for any $0\leq t\leq 1$.
\end{restatable}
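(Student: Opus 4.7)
The plan is to track how dropout noise injected at each layer propagates to the output along the interpolation $\theta_t = (1-t)\theta + t\theta_1$, and to bound the total perturbation using the noise-stability constants from Definition~\ref{def:allnoise}. First I would set up notation: for $i\ge 2$ the weight at layer $i$ in $\theta_t$ equals $\tilde A_i(t) := (1-t)A_i + t\hat A_i$, whose $j$-th column is the rescaling $[(1-t)+t\delta_{i,j}]\,[A_i]_j$. This is an unbiased perturbation of $[A_i]_j$ whose deviation has variance $t^2 p/(1-p)$, and the dropouts are independent across layers. I would then telescope via hybrid parameters $\eta_i$ that use $\theta_t$ on layers $\le i$ and $\theta$ on layers $>i$, giving
\[
f_{\theta_t}(x) - f_\theta(x) \;=\; \sum_{i=2}^d \bigl[f_{\eta_i}(x) - f_{\eta_{i-1}}(x)\bigr].
\]
Each summand isolates the noise injected at layer $i$, since the map above layer $i$ is the original $M^{i,d}$ and the input fed into $\tilde A_i(t)$ is the same hybrid activation $\hat x^{i-1}$ under both $\eta_i$ and $\eta_{i-1}$.

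The heart of the argument is to control one summand by linearizing $M^{i,d}$ at $x^i$ via its Jacobian $J^{i,d}_{x^i}$. Interlayer smoothness together with $\rho\ge 3d$ handles the linearization error, provided an inductive claim that $\|\hat x^{i-1}-x^{i-1}\|$ is already small; that inductive control also lets me replace $\phi(\hat x^{i-1})$ by $\phi(x^{i-1})$ via $1$-Lipschitzness of ReLU and the activation contraction $c$. What remains is the random vector
\[
Z_i(t) := J^{i,d}_{x^i}(\tilde A_i(t)-A_i)\phi(x^{i-1}) \;=\; \sum_j (\tilde\delta_{i,j}-1)\,[\phi(x^{i-1})]_j\, J^{i,d}_{x^i}[A_i]_j,
\]
a sum of independent mean-zero vectors. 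A second-moment computation, combined with the $\ell_\infty$ activation hypothesis $\n{\phi(x^{i-1})}_\infty = O(1/\sqrt{h_{i-1}})\n{\phi(x^{i-1})}$, the layer cushion, and the interlayer cushion, collapses $\E\|Z_i(t)\|^2$ to $O\bigl(t^2 p / h_{i-1}\bigr)\|x^d\|^2/(\mu_i\icu)^2$. A vector concentration bound (scalar Bernstein on directional projections, using $\n{[A_i]_j}=O(\sqrt p)\n{A_i}_F$ to control summand magnitudes) then yields $\|Z_i(t)\|\le \tdo\bigl(t\sqrt p/(\sqrt{h_{i-1}}\,\mu_i\icu)\bigr)\|x^d\|$ with failure probability at most $O(1/d)$. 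Summing the $d-1$ layers, attaching the $\beta$-Lipschitz constant of $l$ and the activation contraction, and extending from a single $t$ to all $t\in[0,1]$ by a union bound on a polynomially fine grid (valid because $\theta_t$ is affine in $t$ and $f_{\theta_t}(x)$ is polynomially Lipschitz in the parameters, with the $\tdo(\cdot)$ absorbing the resulting $\log$ factor) produces the claimed $\tdo(\sqrt p\,\epsilon)$ loss bound with probability at least $1/4$.

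The main obstacle is the mutual entanglement between the inductive closeness $\|\hat x^{i-1}-x^{i-1}\|\le\mathrm{small}$ and the per-layer concentration: each layer's bound presupposes that earlier activations have not drifted, yet that control itself comes from the same bound. Keeping the compounded error strictly below the stability margin encoded in $(\mu_i,\icu,c,\rho)$—so that after summing over $d$ layers the final bound is $\tdo(\sqrt p\,\epsilon)$ rather than something growing with depth—is the delicate accounting at the core of the proof, and it is exactly this cooperation that the definition of $\epsilon$-noise stability has been engineered to deliver.
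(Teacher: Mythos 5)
Your overall architecture matches the paper's: telescope layer by layer, linearize the tail map $M^{i,d}$ at $x^i$ and charge the nonlinear remainder to interlayer smoothness, concentrate the per-layer dropout noise with a vector Bernstein bound, and finish with an $\eps$-net in $t$. But there is a genuine gap in the one step where you deviate. You propose to first replace $\phi(\hat x^{i-1})$ by the clean activation $\phi(x^{i-1})$ and then concentrate the sum $Z_i(t)=J^{i,d}_{x^i}(\tilde A_i(t)-A_i)\phi(x^{i-1})$ against a deterministic input. This swap leaves behind the cross term $J^{i,d}_{x^i}(\tilde A_i(t)-A_i)\bigl(\phi(\hat x^{i-1})-\phi(x^{i-1})\bigr)$, which your outline never controls. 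The difference vector $\phi(\hat x^{i-1})-\phi(x^{i-1})$ carries no $\ell_\infty$-spread guarantee, so the Bernstein argument cannot be applied to it; the only available bound is via operator norms, $\n{J^{i,d}_{x^i}}\cdot\n{\tilde A_i(t)-A_i}\cdot\n{\phi(\hat x^{i-1})-\phi(x^{i-1})}$, and chasing this through the layer and interlayer cushions yields roughly $\frac{c}{\mu_i\icu}\cdot\frac{\n{\hat x^{i-1}-x^{i-1}}}{\n{x^{i-1}}}\,\n{x^d}$. Since $c\ge 1$ and $\mu_i,\icu\le 1$, this multiplies the accumulated relative error by a constant factor larger than $1$ at every layer, so the recursion compounds to something exponential in $d$ instead of staying within the per-layer budget $\eps'\n{x^d}/d$ that your final summation over $d-1$ layers assumes.

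The paper avoids the cross term entirely by applying its concentration lemma directly to the pair $(J^{i,j}_{x^i},\phi(\hat x^{i-1}))$: the perturbed activation depends only on $\hat A_2,\dots,\hat A_{i-1}$ and is independent of the fresh randomness in $\hat A_i$, so conditionally it is a fixed input; this is exactly why the $\ell_\infty$-spread hypothesis in the definition of $\eps$-noise stability is stated for the perturbed activations $\phi(\hat x^i_i(t))$ rather than for the clean ones. If you rework your per-layer step to concentrate against $\phi(\hat x^{i-1})$ itself, and run the induction over all pairs $i\le j\le d$ rather than only at the output layer---you need $\n{\hat x^i_i(t)-x^i}$ small both to invoke interlayer smoothness at the next step and to bound $\n{\phi(\hat x^{i-1})}$ by $2\n{x^{i-1}}$---the rest of your outline goes through essentially as in the paper.
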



The main difference between Lemma~\ref{lem:dropout_crosslayer} and Lemma~\ref{lem:original_sparse} is that we can now directly interpolate between the original network and its dropout version, which reduces the number of segments required. This is mainly because in the noise stable setting, we can prove that after dropping out the neurons, not only does the output remains stable but moreover {\em every} intermediate layer also remains stable.

From Lemma~\ref{lem:dropout_crosslayer}, the proof of Theorem~\ref{thm:noise_stable_connect} is very similar to the proof of Theorem~\ref{thm:connect_via_dropout}. The detailed proof is given in Section~\ref{sec:noise_stability_proof}.

The additional power of Lemma~\ref{lem:dropout_crosslayer} also allows us to consider a smaller dropout probability. The theorem below allows us to trade the dropout fraction with the energy barrier $\epsilon$ that we can prove---if the network is highly overparametrized, one can choose a small dropout probability $p$ which allow the energy barrier $\epsilon$ to be smaller. 

\begin{restatable}{theorem}{teacherstudent}\label{cor:teacher_student}
Suppose there exists a network $\theta^*$ with layer width $h_{i}^*$ for each layer $i$ that achieves loss $L(f_{\theta^*})$, and minimum hidden layer width $h^*_{min} = \tdomega(1)$.
Let $\theta^A$ and $\theta^B$ be two $\epsilon$-noise stable networks. 
For any dropout probability $1.5\max_{1\leq i\leq d-1}(h_i^*/h_i)\leq p\leq 3/4$, if for any $2\leq i\leq d$, $1\leq j\leq h_{i-1},$  $\n{[A_i]_j}=O(\sqrt{p})\n{A_i}_F$ then 
there exists a path with $13$ line segments in parameter space $\pi : [0,1] \to \Theta$ between $\theta^A$ and $\theta^B$ such that $L(f_{\pi(t)}) \leq \max \{L(f_{\theta^A})+\tdo(\sqrt{p}\epsilon),L(f_{\theta^B})+\tdo(\sqrt{p}\epsilon), L(f_{\theta^*})\}$ for $0 \leq t \leq 1$.
\end{restatable}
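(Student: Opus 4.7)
The plan is to follow the template of Theorem~\ref{thm:noise_stable_connect} but route the path through an embedded copy of the reference network $\theta^*$ rather than linking two dropout waypoints directly. The argument splits into three phases.

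First, I would apply Lemma~\ref{lem:dropout_crosslayer} once to $\theta^A$ and once to $\theta^B$ with dropout probability $p$, producing dropout versions $\theta^A_1$ and $\theta^B_1$ each connected to its origin by a single line segment along which the loss grows by at most $\tdo(\sqrt{p}\epsilon)$. After this step, roughly a $p$-fraction of the columns of each weight matrix of $\theta^A_1$ (and of $\theta^B_1$) are identically zero, so in each hidden layer $i$ at least about $ph_i$ neurons are inactive. Next, I would embed $\theta^*$ in the wider architecture by defining $\tilde\theta^*$: place the $h^*_i\times h^*_{i-1}$ weight matrices of $\theta^*$ into an arbitrary sub-block of each $A_i$ and fill the remaining entries with zeros, so that $f_{\tilde\theta^*}\equiv f_{\theta^*}$. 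The width condition $p\ge 1.5\max_i h^*_i/h_i$ gives $h^*_i\le p h_i/1.5$, which is strictly smaller than the number of zero neurons in either $\theta^A_1$ or $\theta^B_1$. Hence in every hidden layer the active support of $\tilde\theta^*$ can be chosen disjoint from the active support of $\theta^A_1$ (and of $\theta^B_1$) with slack.

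The third phase connects $\theta^A_1\to \tilde\theta^*\to \theta^B_1$ by a piecewise-linear path whose intermediate networks are always sparse with pairwise-disjoint supports. This uses only the two primitive moves from the proofs of Lemma~\ref{lem:original_sparse} and Lemma~\ref{lem:drop-connect}: (a)~convex interpolations in the top layer, on which the loss is convex, and (b)~arbitrary modifications of the input weights of any neuron whose current output weights are zero, which do not alter the computed function. Applying the Lemma~\ref{lem:drop-connect}-style alternating construction from $\theta^A_1$ to a permuted copy of $\tilde\theta^*$ and then from $\tilde\theta^*$ to $\theta^B_1$, while reusing $\tilde\theta^*$ as the shared waypoint, yields $1+11+1=13$ line segments. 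At every point along this middle portion the function computed is either one of $f_{\theta^A_1},f_{\tilde\theta^*},f_{\theta^B_1}$ (after a type-(b) move) or a convex combination of two of them in the output layer (a type-(a) move), so the loss is bounded by $\max\{L(f_{\theta^A})+\tdo(\sqrt p\epsilon),\,L(f_{\theta^*}),\,L(f_{\theta^B})+\tdo(\sqrt p\epsilon)\}$, matching the statement.

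The main obstacle is the segment accounting in phase (iii): Lemma~\ref{lem:drop-connect} was stated under the symmetric hypothesis that both endpoints have at least $\lceil h_i/2\rceil$ zero neurons, but here the dropout fraction $p$ may be well below $1/2$ in the overparametrized regime, so $\theta^A_1$ and $\theta^B_1$ need not have half of their neurons zeroed. What rescues the construction is the strict inequality $(1-p)h_i+h^*_i<h_i$, which reserves just enough hidden-layer capacity to lay out the active units of $\theta^A_1$ and $\tilde\theta^*$ (and, separately, of $\tilde\theta^*$ and $\theta^B_1$) in disjoint blocks. Once that disjointness is set up by the standard ``zero-out-then-permute'' sub-paths of Lemma~\ref{lem:drop-connect}, all remaining top-layer interpolations are convex and incur no cost beyond the pointwise maximum of the three waypoint losses, yielding the claimed bound with exactly $13$ segments.
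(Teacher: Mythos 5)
Your proposal is correct and follows essentially the same route as the paper: interpolate $\theta^A$ and $\theta^B$ to dropout versions via Lemma~\ref{lem:dropout_crosslayer} with the smaller dropout probability $p$ (one segment each, using $\tfrac{2}{3}p h_i \ge h_i^*$ to guarantee enough zeroed units), embed $\theta^*$ with its active units placed in the freed-up slots, and connect $\theta^A_1 \to \theta^* \to \theta^B_1$ with the Lemma~\ref{lem:drop-connect} machinery, permuting $\theta^*$ only once so that the two legs cost $8+3=11$ segments, for $13$ in total. Both the loss bound and the segment accounting coincide with the paper's argument.
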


Intuitively, we prove this theorem by connecting $\theta^A$ and $\theta^B$ via the neural network $\theta^*$ with narrow hidden layers. The detailed proof is given in Section~\ref{sec:noise_stability_proof}.
\section{Disconnected modes in two-layer nets}
\label{sec:example:main}\label{sec:examples:main}
The mode connectivity property is not true for every neural network. \citet{freeman2016topology} gave a counter-example showing that if the network is not overparametrized, then there can be different global minima of the neural network that are not connected. \citet{venturi2018spurious} showed that spurious valleys can exist for 2-layer ReLU nets with an arbitrary number of hidden units, but again they do not extend their result to the overparametrized setting. In this section, we show that even if a neural network is overparametrized---in the sense that there exists a network of smaller width that can achieve optimal loss---there can still be two global minimizers that are not connected.

In particular, suppose we are training a two-layer ReLU student network with $h$ hidden units to fit a dataset generated by a ground truth two-layer ReLU teacher network with $h_t$ hidden units such that the samples in the dataset are drawn from some input distribution and the labels computed via forward passes through the teacher network. The following theorem demonstrates that regardless of the degree to which the student network is overparametrized, we can always construct such a dataset for which global minima are not connected.


\begin{theorem}\label{thm:counterexample}
For any width $h$ and and convex loss function $l: \R \times \R \mapsto \R$ such that $l(y,\hat y)$ is minimized when $y = \hat y$, there exists a dataset generated by ground-truth teacher network with two hidden units (i.e. $h_t = 2$) and one output unit such that global minimizers are not connected for a student network with $h$ hidden units.
\end{theorem}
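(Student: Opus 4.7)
The plan is to exhibit an explicit dataset, generated by a fixed two-unit ReLU teacher, on which the zero-loss set in the student's parameter space has at least two connected components. Since $l(y,\hat y)$ is (uniquely) minimized at $y=\hat y$, any continuous path between representatives of two different components must pass through a configuration of strictly positive loss, which immediately gives the claim. The proof therefore reduces to (i) a combinatorial partition of the zero-loss set by pre-activation sign patterns, and (ii) a connectivity argument showing that crossing between distinct sign-pattern cells strictly increases the loss.

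I would first handle the clean illustrative case $h=2$ in input dimension $1$. Take the teacher $f^*(x) = a\,\mathrm{ReLU}(x) + b\,\mathrm{ReLU}(-x)$ with $a,b>0$ (two hidden units, output weights equal to $1$) and the two-point dataset $\{(1,a),(-1,b)\}$. For a two-unit student $g(x) = v_1\,\mathrm{ReLU}(w_1 x) + v_2\,\mathrm{ReLU}(w_2 x)$, zero loss is equivalent to the pair of equations $\sum_{i:\,w_i>0} v_i w_i = a$ and $\sum_{i:\,w_i<0} v_i(-w_i) = b$. Because both $a$ and $b$ are strictly positive, the two units must have $w_i$'s of opposite sign, and neither weight may vanish. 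The zero-loss set therefore partitions into the two disjoint sign-pattern classes $\{w_1>0,\,w_2<0\}$ and $\{w_1<0,\,w_2>0\}$. By the intermediate value theorem, any continuous path between a minimum in one class and a minimum in the other must have some $w_i$ pass through $0$ at an intermediate time; at that instant one of the two equations above fails, so $g$ disagrees with the label at $+1$ or $-1$, and by uniqueness of the loss minimizer the total loss becomes strictly positive.

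For general $h$ the construction has to be enriched to preclude the \emph{dead-unit bridge}: with spare capacity one can zero out a unit's output weight, rotate its input weight across the origin, and restore it on the other side, thereby continuously connecting the two sign-pattern classes at zero cost. My plan for ruling this out is to enlarge the input dimension with $h$ and augment the dataset with (a) orthogonal zero-label points that force every hidden unit of any zero-loss student to lie essentially in the teacher's one-dimensional span, and (b) further mass-separation samples that force each of the $h$ units to carry an essential share of either $a$ or $b$, leaving no unit free to act as a bridge. Two globally optimal students $\theta^A$ and $\theta^B$ are then obtained by two combinatorially distinct assignments of units to sides, and the sign-pattern argument from the $h=2$ case is applied unit by unit. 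The technical heart of the proof is precisely the design of this augmented dataset: it must be rigid enough to kill every dead-unit bridge, yet still realizable by a genuine $2$-unit ReLU teacher and compatible with the convex loss assumption; I expect this rigidity bookkeeping, rather than the sign-pattern argument itself, to be the main obstacle.
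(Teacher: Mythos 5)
Your $h=2$, one-dimensional warm-up is correct and is in the same spirit as the paper's argument (a sign/intermediate-value obstruction forcing positive loss at the crossing). But the theorem quantifies over \emph{every} width $h$, and for general $h$ your proposal stops exactly where the proof has to begin: you correctly identify the dead-unit bridge as the obstacle and then defer the construction of the "rigid" dataset that kills it, explicitly flagging it as the unresolved technical heart. That construction \emph{is} the proof; without it there is no argument for $h\ge 3$. Moreover, the mechanism you sketch for killing bridges --- forcing all $h$ units to carry an essential share of the teacher's mass, so that no unit can be retired and re-inserted --- is in tension with the existence of two combinatorially distinct global optima (and with the path freedom of high-dimensional input weights, where a unit can leave the teacher's span without its weight vector ever passing through the origin), and you give no evidence that a dataset with these properties exists and is realizable by a two-unit teacher.

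The paper's construction resolves this differently, and it is worth seeing why. It builds an explicit dataset in $\R^{h+2}$ whose feature columns $\mathbf f_1,\dots,\mathbf f_{h+2}$ satisfy $\phi(\mathbf f_1)-\phi(\mathbf f_2)=\sum_{j=3}^{h+2}\phi(\mathbf f_j)=\mathbf y$, where $\mathbf y$ is $1$ on the first $l$ samples and $0$ afterwards. This yields two global minima: one matching the two-unit teacher, which necessarily has a \emph{negative} output weight (on the unit computing $\phi(\mathbf f_2)$), and one using $h$ units with \emph{all positive} output weights. The topological obstruction therefore lives in the signs of the \emph{output} weights: by the intermediate value theorem some output weight must vanish along any connecting path, leaving at most $h-1$ positively-weighted active units, and Lemma~\ref{lem:emp-counterexample} shows $\mathbf y$ cannot be written as $\sum_{i=1}^{h-1}w_i\phi(\hh_i)$ with $w_i>0$ and $\hh_i\in\spn X$. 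That lemma is where the rigidity you were worried about is actually engineered: the strictly increasing first two columns rule out using $\mathbf f_1,\mathbf f_2$ with positive coefficients, the trailing zero-label blocks force nonnegative coefficients, mixing two of the periodic features $\mathbf f_3,\dots,\mathbf f_{h+2}$ breaks the constancy of the first $l$ labels, and a pigeonhole over the $h$ features defeats $h-1$ units. Notably, the paper does \emph{not} need every unit to be essential --- most units may be dead --- which is what makes the construction feasible. Your proposal is missing this key idea, so as it stands there is a genuine gap.
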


Our proof is based on an explicit construction. The detailed construction is given in Section~\ref{sec:example}.
\section{Experiments}
\label{sec:experiments}
We now demonstrate that our assumptions and theoretical findings accurately characterize mode connectivity in practical settings. In particular, we empirically validate our claims using standard convolutional architectures---for which we treat individual filters as the hidden units and apply channel-wise dropout (see Remark~\ref{rmk:channelwise})---trained on datasets such as CIFAR-10 and MNIST.

Training with dropout is not necessary for a network to be either dropout-stable or noise-stable. Recall that our definition of dropout-stability merely requires the existence of a particular sub-network with half the width of the original that achieves low loss. Moreover, as Theorem~\ref{cor:teacher_student} suggests, if there exists a narrow network that achieves low loss
, then we need only be able to drop out a number of filters equal to the width of the narrow network to connect local minima.

\begin{figure}[!htb]
\minipage{0.32\textwidth}
  \includegraphics[width=1\linewidth]{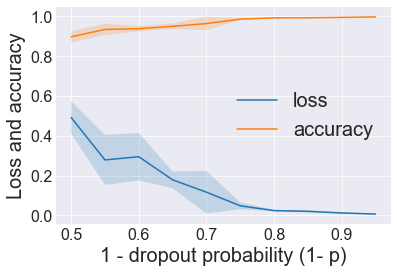}
\endminipage\hfill
\minipage{0.32\textwidth}
  \includegraphics[width=1\linewidth]{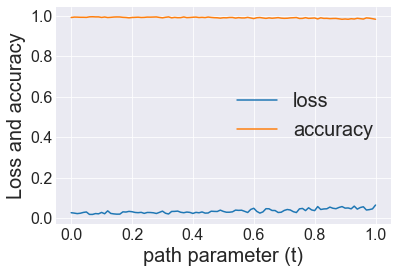}
\endminipage\hfill
\minipage{0.32\textwidth}%
  \includegraphics[width=1\linewidth]{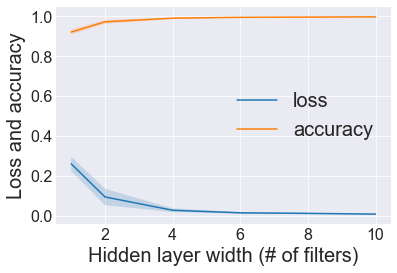}
\endminipage
\caption{Results for convolutional networks trained on MNIST.}
\label{fig:mnist}
\end{figure}

First, we demonstrate in the left plot in Figure~\ref{fig:mnist} on MNIST that 3-layer convolutional nets (not counting the output layer) with 32 $3 \times 3$ filters in each layer tend to be fairly dropout stable---both in the original sense of Definition~\ref{def:drop-stability} and especially if we relax the definition to allow for wider subnetworks---despite the fact that no dropout was applied in  training. For each trial, we randomly sampled $20$ dropout networks with $\emph{exactly}$ $\lfloor 32(1-p) \rfloor$ non-zero filters in each layer and report the performance of the best one. In the center plot, we verify for $p = 0.2$ we can construct a linear path $\pi(t):\R \to \Theta$ from our convolutional net to a dropout version of itself. 
Similar results were observed when varying $p$. Finally, in the right plot we demonstrate the existence of 3-layer convolutional nets just a few filters wide that are able to achieve low loss on MNIST. Taken together, these results indicate that our path construction in Theorem~\ref{cor:teacher_student} performs well in practical settings. In particular, we can connect two convolutional nets trained on MNIST by way of first interpolating between the original nets and their dropped out versions with $p = 0.2$, and then connecting the dropped out versions by way of a narrow subnetwork with at most $\lfloor 32p \rfloor$ non-zero filters.

\begin{figure}[!th]
\begin{minipage}[b]{0.8\textwidth}
	\begin{subfigure}[t]{0.27\textwidth}
	\centering
	\includegraphics[height=2.9cm, width=3.5cm]{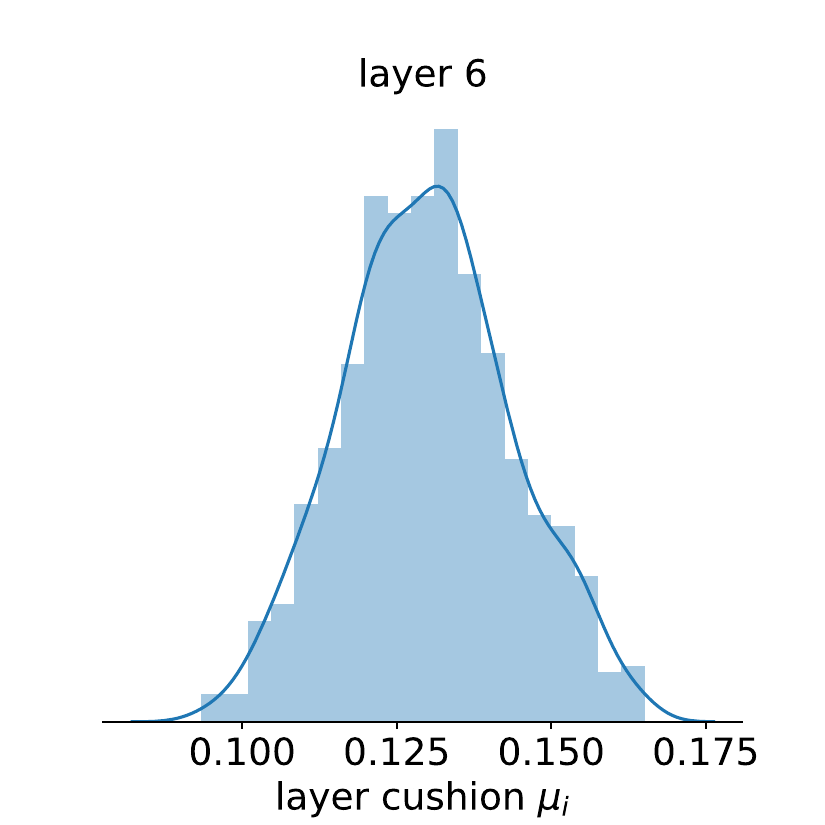}
	\end{subfigure}
	\begin{subfigure}[t]{0.27\textwidth}
	\centering
	\includegraphics[height=2.9cm, width=3.5cm]{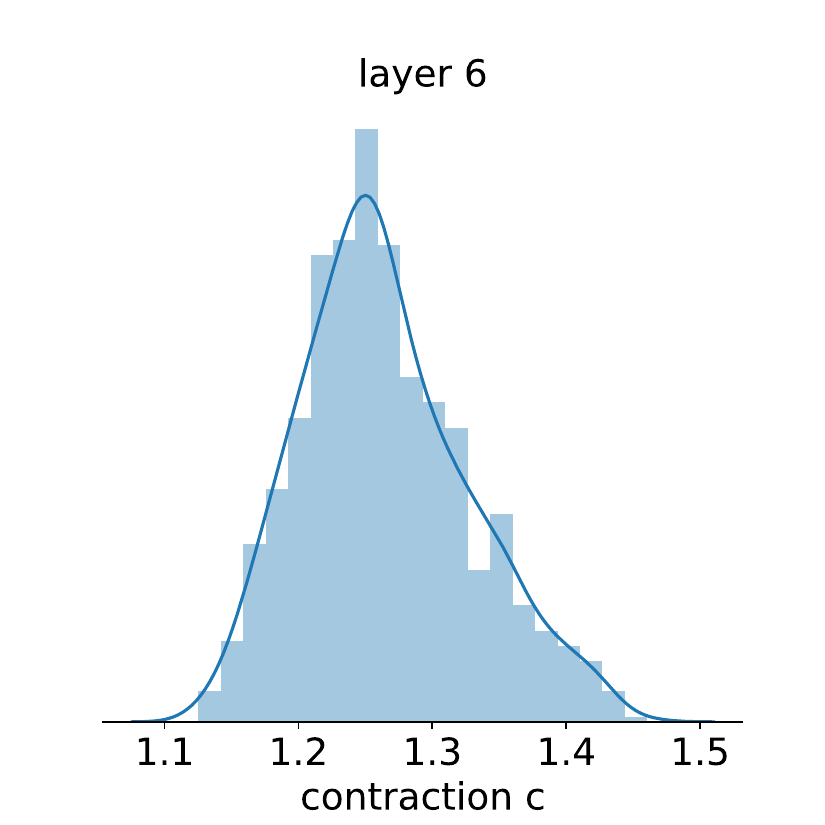}
	\end{subfigure}
    
	\begin{subfigure}[t]{0.27\textwidth}
	\centering
	\includegraphics[height=2.9cm, width=3.5cm]{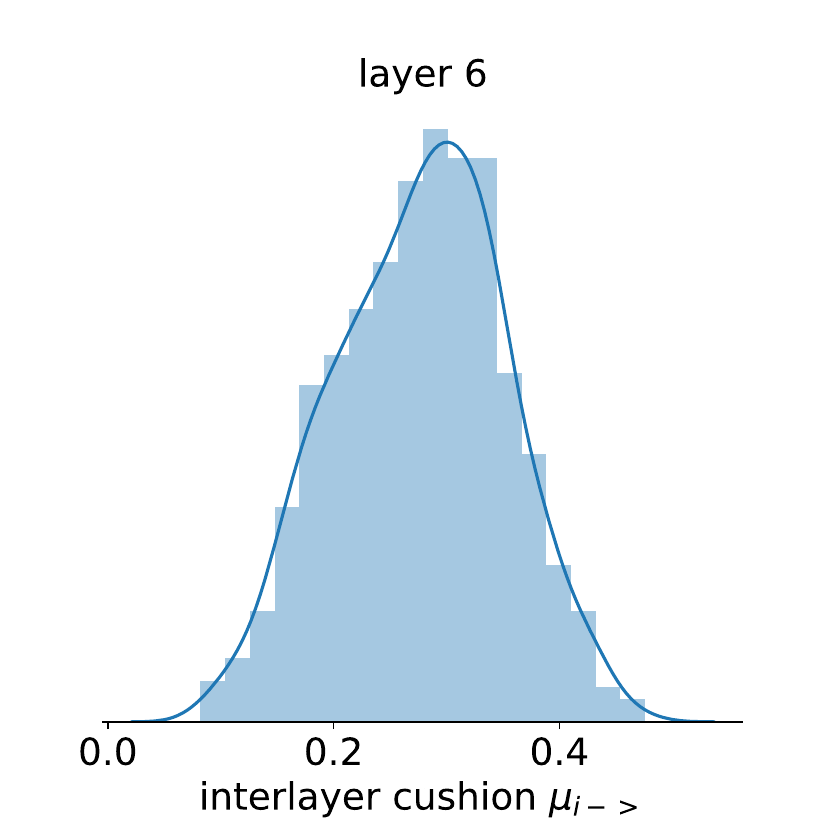}
	\end{subfigure}
	\begin{subfigure}[t]{0.27\textwidth}
	\centering
	\includegraphics[height=2.9cm, width=3.5cm]{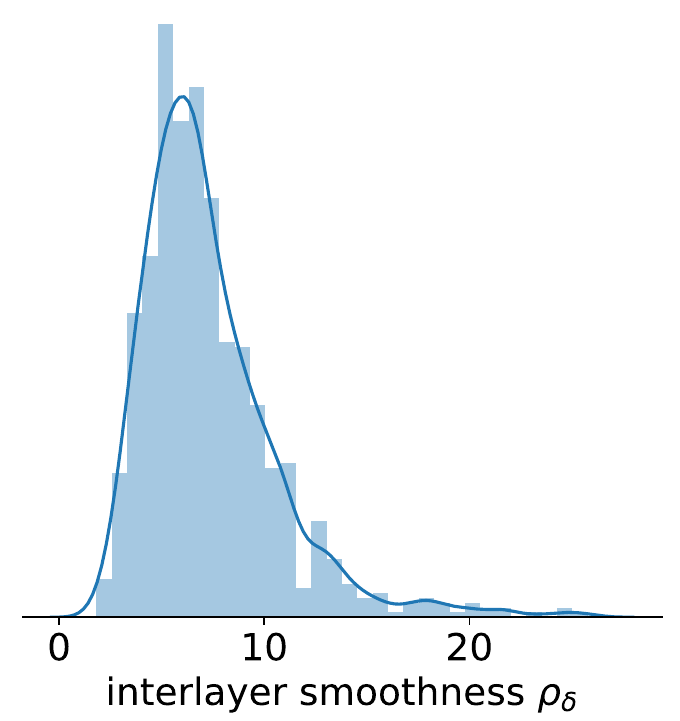}
	\end{subfigure}
\end{minipage}
\hspace{-4.5cm}
\begin{minipage}[b]{0.1\textwidth}
    \centering
    \begin{subfigure}[t]{0.8\textwidth}
	\centering
	\includegraphics[height=5cm, width=7cm]{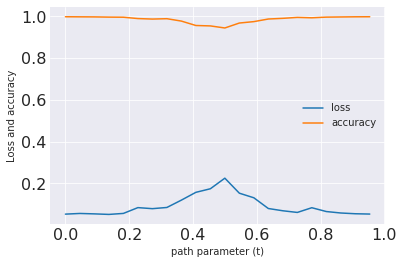}
	\end{subfigure}
\end{minipage}
\caption{Left) Distribution of layer cushion, activation contraction, interlayer cushion and interlayer smoothness of the 6-th layer of a VGG-11 network on the training set. The other layers' parameters are exhibited in Section~\ref{appdix:verification}. Right) The loss and training accuracy along the path between two noise stable VGG-11 networks described in Theorem~\ref{lem:dropout_crosslayer}.}
\label{fig:properties}
\end{figure}


We also demonstrate that the VGG-11~\citep{simonyan2014very} architecture trained with channel-wise dropout~\citep{tompson2015efficient, keshari2018guided} with $p=0.25$ at the first three layers\footnote{we find the first three layers are less resistant to channel-wise dropout.}
and $p=0.5$ at the others on CIFAR-10 converges to a noise stable minima---as measured by layer cushion, interlayer cushion, activation contraction and interlayer smoothness. The network under investigation achieves 95\% training and 91\% test accuracy with channel-wise dropout \emph{activated}, in comparison to 99\% training and 92\% test accuracy with dropout turned off. Figure~\ref{fig:properties} plots the distribution of the noise stability parameters over different data points in the training set, from which we can see they behave nicely. Interestingly, we also discovered that networks trained without channel-wise dropout exhibit similarly nice behavior on all but the first few layers. Finally, in Figure~\ref{fig:properties}, we demonstrate that the training loss and accuracy obtained via the path construction in Theorem~\ref{lem:dropout_crosslayer} between two noise stable VGG-11 networks $\theta_A$ and $\theta_B$ remain fairly low and high respectively---particularly in comparison to directly interpolating between the two networks, which incurs loss as high as 2.34 and accuracy as low as 10\%, as shown in Section~\ref{sec:direct_interpolation}.

Further details on all experiments are provided in Section~\ref{appdix:hyperparameters}.

\subsubsection*{Acknowledgments} Rong Ge acknowledges funding from NSF CCF-1704656, NSF CCF-1845171 (CAREER), the Sloan Fellowship and Google Faculty Research Award. Sanjeev Arora acknowledges funding from the NSF, ONR, Simons Foundation, Schmidt Foundation, Amazon Research, DARPA and SRC.

\bibliography{bib}
\bibliographystyle{apalike}

\newpage
\appendix

\section{Proofs for connectivity of dropout-stable optima}\label{sec:dropout_proof}
\begin{proofof}{Lemma~\ref{lem:original_sparse}}
Without loss of generality, suppose for each $\theta_i$ that the subset of $\lfloor h_i/2 \rfloor$ non-zero hidden units in each layer are all indexed between $1$ and $\lfloor h_i/2 \rfloor$. For $1 < i < d$, we can partition $A_i$ into quadrants such that $A_i=\left[\begin{array}{c|c}
L_i & C_i \\
\hline
D_i & R_i
\end{array}\right]$. 
(Here, $L_i\in \R^{\fl{h_i/2}\times \fl{h_i/2}}$. If $h_i$ is odd, when we write $L_i$ in the other quadrants we implicitly pad it with zeros in a consistent manner.)
Similarly, we can partition $A_1$ such that $A_1 = 
\left[\begin{array}{c}
L_1 \\
\hline
B_1
\end{array}\right]$ and $A_d$ such that $A_d = \left[\begin{array}{c|c}
L_d & R_d \\
\end{array}\right]$.  We will sometimes use the notation $A_i$ to refer to the value of $A_i$ at a given point on our path, while $A_i^\theta$ will always refer to the value of $A_i$ at $\theta$. We now proceed to prove via induction the existence of a path from $\theta$ to $\theta_i$ for all $i$ whose loss is bounded by $L(f_\theta)+\epsilon$, from which the main result immediately follows.\\

\noindent\textbf{Base case: from $\theta$ to $\theta_{d-1}$}
As a base case of the induction, we need to construct a path from $\theta$ to $\theta_{d-1}$, such that the loss is bounded by $L(f_\theta)+\epsilon$. First, note that setting a particular subset of columns (e.g. the right half of columns) in $A_i$ to zero is equivalent to setting the corresponding rows (e.g. the bottom half of rows) of $A_{i-1}$ to zero. So from the fact that $L(f_{\theta_{d-1}}) \leq L(f_\theta)+\epsilon$ it follows that we can equivalently replace $A_d^{\theta}$ with $\left[\begin{array}{c|c}
r L_d^{\theta} & 0 \\
\end{array}\right]$ without increasing our loss by more than $\epsilon$. 

In fact, because our loss function is convex over $A_d$ we can actually interpolate $A_d$ between $A_d^{\theta}$ and $\left[\begin{array}{c|c}
r L_d^{\theta} & 0 \\
\end{array}\right]$ while keeping our loss below $L(f_\theta)+\epsilon$ at every point along this subpath.

Then, because $R_d = 0$ we can modify both $D_{d-1}$ and $R_{d-1}$ any way we'd like without affecting the output of our network. In particular, we can interpolate $A_{d-1}$ between $A_{d-1}^{\theta}$ and $\left[\begin{array}{c|c}
L_{d-1}^{\theta} & C_{d-1}^{\theta} \\
\hline
0 & 0
\end{array}\right]$ while keeping our loss constant long this subpath, thus arriving at $\theta_{d-1}$.

\noindent \textbf{From $\theta_k$ to $\theta_{k-1}$}

Suppose we have found a path from $\theta$ to $\theta_k$ such that (1) $A_d^{\theta_k} = \left[\begin{array}{c|c}
r L_d^{\theta} & 0 \\
\end{array}\right]$, (2) $A_i^{\theta_k} = \left[\begin{array}{c|c}
r L_{i}^{\theta} & 0 \\
\hline
0 & 0
\end{array}\right]$ for $k < i < d$, (3) $A_k^{\theta_k} = \left[\begin{array}{c|c}
L_{k}^{\theta} & C_{k}^{\theta} \\
\hline
0 & 0
\end{array}\right]$, and (4) $A_i^{\theta_k} = A_i^{\theta}$ for $i < k$, such that the loss along the path is at most $L(f_{\theta}) + \epsilon$. Note that $\theta_{d-1}$ satisfies all these assumptions, including in particular (2) as there are of course no $A_i$ between $A_{d-1}$ and $A_{d}$. Now let us extend this path to $\theta_{k-1}$. 

First, because the rightmost columns of $A_i$ are zero for $k < i \leq d$, we can modify the bottom rows of $A_i$ for $k \leq i < d$ without affecting the output of our network. In particular, we can set $A_k$ to $\left[\begin{array}{c|c}
L_k^{\theta} & C_k^{\theta} \\
\hline
r L_k^{\theta} & 0
\end{array}\right]$, as well as $A_i$ to $\left[\begin{array}{c|c}
r L_i^{\theta} & 0 \\
\hline
0 & r L_i^{\theta}
\end{array}\right]$ for $k < i < d$. From the fact that the loss is convex over $A_d$ and that $L(f_{\theta_{k-1}}) < L(f_{\theta}) + \epsilon$, it then follows that we can set $A_d$ to $\left[\begin{array}{c|c}
0 & r L_d^{\theta} \\
\end{array}\right]$ via interpolation while keeping our loss below $L(f_{\theta}) + \epsilon$. In particular, note that because the off-diagonal blocks of $A_i$ are zero for $k < i < d$, interpolating between the leftmost columns of $A_d$ being non-zero and the rightmost columns of $A_d$ being non-zero simply amounts to interpolating between the outputs of the two subnetworks comprised respectively of the first $\lfloor h_i/2 \rfloor$ and last $\lfloor h_i/2 \rfloor$ rows of $A_i$ for $k \leq i < d$.

Once we have the leftmost columns of $A_d$ set to zero and $A_i$ in block-diagonal form for $k < i < d$, we can proceed to modify the top rows of $A_k$ however we'd like without affecting the output of our network. Specifically, let us set $A_k$ to $\left[\begin{array}{c|c}
r L_k^{\theta} & 0 \\
\hline
r L_k^{\theta} & 0
\end{array}\right]$. We can then reset $A_d$ to $\left[\begin{array}{c|c}
r L_d^{\theta} & 0 \\
\end{array}\right]$ via interpolation---this time without affecting our loss since the weights of our two subnetworks are equivalent---and afterwards set $D_k$ to zero and $R_i$ to zero for $k \leq i < d$---again without affecting our loss since the rightmost columns of $A_d$ are now zero, meaning that the bottom rows of $A_i$ have no affect on our network's output.

Following these steps, we will have $A_i = \left[\begin{array}{c|c}
r L_i^{\theta} & 0 \\
\hline
0 & 0
\end{array}\right]$ for $k \leq i < d$ and $A_d = \left[\begin{array}{c|c}
r L_d^{\theta} & 0 \\
\end{array}\right]$. And so we are now free to set the bottom rows of $A_{k-1}$ to zero without affecting our loss, thus arriving at $\theta_{k-1}$.
\end{proofof}

\begin{lemma}\label{lem:permutation}
Let $\theta$ be a parameter such that at least $\lceil h_i/2 \rceil$ of the units in each hidden layer have been set to zero. Then we can achieve an arbitrary permutation of the non-zero hidden units of $\theta$ via a path consisting of just 5 line segments such that our loss is constant along this path.
\end{lemma}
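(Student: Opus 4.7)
The plan is to handle a single hidden layer first and then parallelize across all hidden layers. For a single layer~$i$, let $P$ denote the current positions of its non-zero units and $Q:=\pi(P)$ the desired targets, with $k:=|P|\le h_i/2$. The easy case is $P\cap Q=\emptyset$, where a 3-segment move works: (i)~copy rows of $A_i$ from $P$ onto $Q$ (safe because columns of $A_{i+1}$ at $Q$ are zero, so those rows do not affect the output); (ii)~interpolate columns of $A_{i+1}$ from $P$ to $Q$ while preserving each column-pair sum (safe because the matching rows of $A_i$ now give equal ReLU activations at $p$ and at $\pi(p)$); and (iii)~zero the rows of $A_i$ at $P$.

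For the general case $P\cap Q\neq\emptyset$, I would first relocate the conflicting units at $S:=P\cap Q$ to an auxiliary set $T\subseteq[h_i]\setminus(P\cup Q)$ of size $|S|$---which exists since $|[h_i]\setminus(P\cup Q)|=h_i-2k+|S|\ge|S|$ under the assumption $h_i\ge 2k$---and then perform the move from $(P\setminus Q)\cup T$ to $Q$, whose source and target are now disjoint. Each sub-move is 3 segments, totalling 6. The saving to 5 segments comes from merging the last step of the first sub-move (``zero rows of $A_i$ at $S$'') with the first step of the second sub-move (``set rows of $A_i$ at $Q$ equal to the new source rows''): both steps modify rows of $A_i$ only, and at this instant every column of $A_{i+1}$ indexed by $Q$ is zero (the columns at $S\subseteq Q$ were zeroed in the preceding interpolation segment, and the columns at $Q\setminus P$ were originally zero because those positions were empty), so the merged linear interpolation of $A_i$ cannot change $f_\theta$.

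Parallelization across layers works because layer $i$'s permutation only touches rows of $A_i$ and columns of $A_{i+1}$, and with my segment pattern ``rows, columns, rows, columns, rows,'' within any global segment each matrix $A_j$ has either its rows modified (by layer $j$) or its columns modified (by layer $j-1$), never both. A simple induction on layers then shows that $x^j$ is invariant along the combined path: by the inductive hypothesis $x^{j-1}$ is unchanged, and the only positions where $x^j$ might change under layer $j$'s row modifications are indices in $Q^{(j)}\cup S^{(j)}$, at which the corresponding columns of $A_{j+1}$ are already zero by the argument above. The main subtlety I foresee is ensuring that row-modifications of $A_{j+1}$ by layer $j+1$ do not destroy the zero-column property at layer $j$'s auxiliary positions $T^{(j)}$. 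This holds because positions in $T^{(j)}$ are zero units of layer $j$, so the entire column of $A_{j+1}$ at $T^{(j)}$ is identically zero by hypothesis; any row of $A_{j+1}$ being copied by layer $j+1$ therefore carries a zero entry at $T^{(j)}$, and the copy preserves the zero-column status of $T^{(j)}$.
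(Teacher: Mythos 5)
Your proof is correct and follows essentially the same strategy as the paper's: relocate units through free (zeroed) slots in two stages, alternating row-of-$A_i$ and column-of-$A_{i+1}$ segments in the pattern rows/columns/rows/columns/rows to get 5 segments, then run all layers in parallel since each matrix is only ever modified row-wise or column-wise within a single segment. The only cosmetic difference is that you resolve the $P\cap Q$ conflicts via a bespoke auxiliary set $T$ while the paper routes every unit through the zeroed half via a map $\pi'$; your treatment of the parallelization subtlety (row copies preserving the zero-column property) is if anything slightly more careful than the paper's.
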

\begin{proof}
Let $\pi: [h_i] \mapsto [h_i]$ be some permutation over the units in layer $i$. Without loss of generality, suppose all  non-zero units in layer $i$ are indexed between $0$ and $\lfloor h_i/2 \rfloor$, and define $\pi':\left[\lfloor h_i/2 \rfloor\right] \mapsto [h_i] \setminus \left[\lfloor h_i/2 \rfloor\right]$ as any one-to-one mapping such that $\pi'(i) = \pi(i)$ if $\pi(i) \in [h_i] \setminus \left[\lfloor h_i/2 \rfloor\right]$. Note that when we refer to a unit $j$ as ``set to zero'', we mean that both row $j$ of $A_i$ and column $j$ of $A_{i+1}$ have been set to zero.

To permute the units of layer $i$, we can first simultaneously copy the non-zero rows of $A_i$ into a subset of the rows that have been set to zero. Specifically, for $j \in \left[\lfloor h_i/2 \rfloor\right]$ we can copy row $j$ of $A_i$ into row $\pi'(j)$ via interpolation and without affecting our loss, due to the fact that column $\pi'(j)$ in $A_{i+1}$ is set to zero. We can then set column $j$ of $A_{i+1}$ to zero while copying its value to column $\pi'(j)$, again via interpolation and without affecting our loss since rows $j$ and $\pi'(j)$ of $A_i$ are now equivalent.

Following these first two steps, the first $\lfloor h_i/2 \rfloor$ columns of $A_{i+1}$ will have been set to zero. Thus, for all $j \in \left[\lfloor h_i/2 \rfloor\right]$ such that $\pi(j) \in [h_i/2]$ we can copy row $\pi'(j)$ of $A_i$ into row $\pi(j)$ without affecting our loss. We can then set column $\pi'(j)$ of $A_{i+1}$ to zero while copying its value into column $\pi(j)$ via interpolation and without affecting our loss since rows $\pi'(j)$ and $\pi(j)$ of $A_i$ are now equivalent. Setting row $\pi'(j)$ to zero---again for all $j \in \left[\lfloor h_i/2 \rfloor\right]$ such that $\pi(j) \in [h_i/2]$---completes the permutation for layer $i$.

Note that because we leave the output of layer $i$ unchanged throughout the course of permuting the units of layer $i$, it follows that we can perform all swaps across all layers simultaneously. And so from the fact that permuting each layer can be done in 5 steps---each of which consists of a single line segment in parameter space---the main result immediately follows.
\end{proof}

\begin{proofof}{Lemma~\ref{lem:drop-connect}}
Without loss of generality, suppose for $\theta$ that the subset of $\lfloor h_i/2 \rfloor$ non-zero hidden units in each layer $i$ are all indexed between $0$ and $\lfloor h_i/2 \rfloor$. Note that when we refer to a unit as ``set to zero", we mean that both the corresponding row of $A_i$ and column of $A_{i+1}$ have been set to zero. Adopting our notation in Lemma \ref{lem:original_sparse}, we can construct a path from $\theta$ to $\theta'$ as follows.

First, from the fact that the second half of units in each hidden layer $i$ have been set to zero in $\theta$ we have that $A_1^{\theta} = \left[\begin{array}{c}
L_1^{\theta} \\
\hline
0
\end{array}\right]$,  $A_i^{\theta} = \left[\begin{array}{c|c}
L_i^{\theta} & 0 \\
\hline
0 & 0
\end{array}\right]$ for $1 < i < d$, and $A_d^{\theta} = \left[\begin{array}{c|c}
L_d^{\theta} & 0 \\
\end{array}\right]$. Similarly, half the rows of $A_1^{\theta'}$ are zero, half the rows and columns of $A_i^{\theta'}$ are zero for $1 < i < d$, and half the columns of $A_d^{\theta'}$ are zero. Note that the indices of the non-zero units in $\theta'$ may intersect with those of the non-zero units in $\theta$. For $1 \leq i \leq d$, let $B_i$ denote the submatrix of $A_i$ corresponding to the non-zero rows and columns of $A_i^{\theta'}$.

Because $A_i^{\theta}$ are block-diagonal for $1 < i < d$ and the rightmost columns of $A_d^{\theta}$ are zero, starting from $\theta$ we can modify the bottom rows of $A_i$ for $1 \leq i < d$ any way we'd like without affecting our loss---as done in our path construction for Lemma \ref{lem:original_sparse}. In particular, let us set $A_i$ to $\left[\begin{array}{c|c}
L_i^{\theta} & 0 \\
\hline
0 & B_i^{\theta'}
\end{array}\right]$ for $1 < i < d$ and $A_1$ to $\left[\begin{array}{c}
L_1^{\theta} \\
\hline
B_1^{\theta'}
\end{array}\right]$. Then, from the fact that our loss function is convex over $A_d$ it follows that we can set $A_d$ to $\left[\begin{array}{c|c}
0 & B_d^{\theta'} \\
\end{array}\right]$ via interpolation while keeping our loss below $\max\{L(f_{\theta}),L(f_{\theta'})\}$. Finally, from the fact that the leftmost columns of $A_d$ are now zero and $A_i$ are still block-diagonal for $1 < i < d$, it follows that we can set $L_i$ to zero for $1 \leq i < d$ without affecting our loss---thus making $A_i$ equal to $\left[\begin{array}{c|c}
0 & 0 \\
\hline
0 & B_i^{\theta'}
\end{array}\right]$ for $1 < i < d$ and $A_1$ equal to $\left[\begin{array}{c}
0 \\
\hline
B_1^{\theta'}
\end{array}\right]$.

To complete our path from $\theta$ to $\theta'$ we now simply need to permute the units of each hidden layer so as to return the elements of $B_i^{\theta'}$ to their original positions in $A_i$ for each $i$. From Lemma~\ref{lem:permutation} it follows that we can accomplish this permutation via 5 line segments in parameter space without affecting our loss. Combined with the previous steps above, we have constructed path from $\theta$ to $\theta'$ consisting of a total of 8 line segments whose loss is bounded by $\max\{L(f_{\theta}),L(f_{\theta'})\}$.
\end{proofof}

\begin{proofof}{Theorem~\ref{thm:connect_via_dropout}}
First, from Lemma \ref{lem:original_sparse} we know we can construct paths from both $\theta^A$ to $\theta_1^A$ and $\theta^B$ to $\theta_1^B$ while keeping our loss below $L(f_{\theta^A})+\epsilon$ and $L(f_{\theta^B})+\epsilon$ respectively. From Lemma \ref{lem:drop-connect} we know that we can construct a path from $\theta_1^A$ to $\theta_1^B$ such that the loss along the path is bounded by $\max\{L(f_{\theta_1^A}),L(f_{\theta_1^B})\}$. The main result then follows from the fact that $L(f_{\theta_1^A}) \leq L(f_{\theta^A})+\epsilon$ and $L(f_{\theta_1^B}) \leq L(f_{\theta^B})+\epsilon$ due to $\theta^A$ and $\theta^B$ both being $\epsilon$-dropout stable.
\end{proofof}
\section{Proofs for connectivity via noise stability}\label{sec:noise_stability_proof}
In this section, we give detailed proofs showing that noise stability implies connectivity. In the following lemma, we first show that the network output is stable if we randomly dropout columns in a single layer using Algorithm~\ref{alg:dropout}.  

\begin{lemma}\label{lem:dropout_layer}
For any layer $2\leq i\leq d$, let $G=\{(U^{(l)},x^{(l)})\}_{l=1}^m$ be a set of matrix/vector pairs of size $m$ where $U\in\R^{h_d \times h_i}$ and $x\in\R^{h_{i-1}}$ satisfying $\n{x}_{\infty} = O\pa{\frac{\n{x}}{\sqrt{h_{i-1}}}}$. Given $A_i$, let $\hat{A}_i\in \R^{h_{i}\times h_{i-1}}$ be the output of Algorithm~\ref{alg:dropout} with dropout probability $0<p\le\fc 34$. Assume $\n{[A_i]_j}=O(\sqrt{p})\n{A_i}_F$ for $1\leq j\leq h_{i-1}.$ Given any $0<\delta<1$, let $\epsilon'=O\pa{\sqrt{\frac{\log(mh_d/\delta)p}{h_{\min}}} }$, with probability at least $1-\delta$, we have for any $(U,x)\in G$ that $\|U (\hat{A}_i - A_i) x\| \leq \eps'\|A_i\|_F\|U\|\|x\|$. Further assuming $h_{\min}=\Omega\pa{\frac{\log(1/\delta)}{p}}$, we know with probability at least $1-\delta,$ no less than $\frac{2}{3}p$ fraction of columns in $\hat{A}_i$ are zero vectors.
\end{lemma}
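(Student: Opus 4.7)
The plan is to expand $U(\hat{A}_i - A_i)x$ as a sum of independent zero-mean random vectors and apply a vector Bernstein inequality. Writing the dropout matrix columnwise, we have
$$U(\hat{A}_i - A_i)x = \sum_{j=1}^{h_{i-1}}(\delta_j - 1)\, x_j\, U[A_i]_j,$$
where $\delta_j - 1 \in \{-1,\, p/(1-p)\}$. Because $p \leq 3/4$ we get $|\delta_j - 1| \leq 3$ and $E[(\delta_j - 1)^2] = p/(1-p) \leq 4p$. Thus each summand $Y_j := (\delta_j - 1) x_j U[A_i]_j$ is mean-zero with known tail and variance.

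Next I would bound the variance proxy $V = \sum_j E[\|Y_j\|^2]$ and the almost-sure envelope $R = \max_j \|Y_j\|$. Using $\|U[A_i]_j\| \leq \|U\|\,\|[A_i]_j\|$ and $\sum_j \|[A_i]_j\|^2 = \|A_i\|_F^2$, together with the $\ell_\infty$ hypothesis $\|x\|_\infty = O(\|x\|/\sqrt{h_{i-1}})$, I get
$$V \;\leq\; 4p\,\|x\|_\infty^2\,\|U\|^2\,\|A_i\|_F^2 \;=\; O\!\pa{\tfrac{p\,\|x\|^2\|U\|^2\|A_i\|_F^2}{h_{i-1}}}.$$
For the envelope, combining $|x_j| \leq \|x\|_\infty$ with the column-norm assumption $\|[A_i]_j\| = O(\sqrt{p})\|A_i\|_F$ gives $R = O(\sqrt{p/h_{i-1}})\,\|x\|\,\|U\|\,\|A_i\|_F$. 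Applying vector Bernstein for a single pair $(U,x)$ yields, with failure probability $\delta_0$, $\|U(\hat{A}_i - A_i)x\| = O(\sqrt{V\log(h_d/\delta_0)} + R\log(h_d/\delta_0))$. Setting $\delta_0 = \delta/m$, union bounding over the $m$ pairs in $G$, and substituting $h_{i-1} \geq h_{\min}$ produces the claimed bound with $\epsilon' = O(\sqrt{p\log(mh_d/\delta)/h_{\min}})$.

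For the second assertion, let $Z = |\{j : \delta_j = 0\}|$; then $Z \sim \mathrm{Binomial}(h_{i-1}, p)$ with $E[Z] = p h_{i-1}$. A one-sided multiplicative Chernoff bound gives $P(Z < \tfrac{2}{3} p h_{i-1}) \leq \exp(-p h_{i-1}/18)$, which is at most $\delta$ precisely when $h_{i-1} \geq h_{\min} = \Omega(\log(1/\delta)/p)$. Thus at least a $\tfrac{2}{3}p$ fraction of the columns of $\hat{A}_i$ are zero with probability at least $1-\delta$.

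The main obstacle, and the conceptual content of the lemma, is making both assumptions ($\|x\|_\infty$ being small and $\|[A_i]_j\|$ being only an $O(\sqrt{p})$ fraction of $\|A_i\|_F$) pay off simultaneously: the $\ell_\infty$ bound is what converts the naive variance $O(p\|x\|^2\|U\|^2\|A_i\|_F^2)$ into a $1/h_{\min}$ gain, and the column-norm bound keeps the almost-sure envelope $R$ of the same order as $\sqrt{V}$ so that Bernstein's inequality gives the stated $\sqrt{p/h_{\min}}$ scaling rather than a worse $\sqrt{p}$ bound. Everything else is a mechanical invocation of Bernstein and Chernoff.
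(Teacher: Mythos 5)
Your proposal is correct and follows essentially the same route as the paper's proof: the same columnwise decomposition $\sum_j (\delta_j-1)x_j U[A_i]_j$, the same variance and almost-sure envelope bounds derived from the $\ell_\infty$ and column-norm hypotheses, an application of the vector Bernstein inequality followed by a union bound over $G$, and a Chernoff bound for the fraction of dropped columns. No gaps.
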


Intuitively, this lemma upper-bounds the change in the network output after dropping out a single layer. In the lemma, we should think of $x$ as the input to the current layer, $A_i$ as the layer matrix and $U$ as the Jacobian of the network output with respect to the layer output. If the activation pattern does not change after the dropping out, $U\hat{A}_i x$ is exactly the output of the dropped out network and $\|U (\hat{A}_i - A_i) x\|$ is the change in the network output.

\begin{proofof}{Lemma~\ref{lem:dropout_layer}}
Fixing $2\leq i\leq d$ and one pair $(U,x)\in G$, we show with probability at least $1-\frac{\delta}{m}$, $\|U (\hat{A}_i - A_i) x\| \leq \eps'\|A_i\|_F\|U\|\|x\|$.
Let $U_k$ be the $k$-th column of $U$. Then by definition of $\hat A_i$ in the algorithm, we know 
\begin{align*}
U(\hat{A}_i-A_i)x & = \sum_{k,j}U_k [A_i]_{kj} x_j (\delta_j-1)\\
&= \sum_{j}\pa{\sum_k U_k [A_i]_{kj}} x_j (\delta_j-1), 
\end{align*}
where $\delta_j$ is an i.i.d. Bernoulli random variable which takes the value $0$ with probability $p$ and takes the value $\frac{1}{1-p}$ with probability $(1-p)$.

Let $[A_i]_j$ be the $j$-th column of $A_i$. Because $p\le \fc 34$, $\rc{1-p}=O(1)$ (any $p$ bounded away from 1 will work). Hence the norm for each individual term can be bounded as follows. 
\begin{align*}
\ve{\pa{\sum_k U_k [A_i]_{kj}} x_j (\delta_j-1)}&\stackrel{(*)}\leq O\pf{\n{x}}{\sqrt{h_{i-1}}}\n{U[A_i]_j}\\
&\leq O\pf{\n{x}}{\sqrt{h_{\min}}} \n{U}\n{[A_i]_j}\\
&\stackrel{(\dagger)}{\leq} O\pf{\sqrt{p}\n{U}\n{A_i}_F\n{x}}{\sqrt{h_{\min}}},
\end{align*}
where (*) uses the assumption that $\n{x}_{\infty} = O\pf{\n{x}}{\sqrt{h_{i-1}}}$ and $(\dagger)$ holds because we assume $\n{[A_i]_j}=O(\sqrt{p})\n{A_i}_F$ for $1\leq j\leq h_{i-1}.$

For the total variance, we have 
\begin{align*}
\sigma^2 :&= \sum_j \E\ba{\ve{ \pa{\sum_k U_k [A_i]_{kj}} x_j (\delta_j-1)}^2}\\
&\leq \sum_j \ns{U[A_i]_j}|x_j|^2\pa{(0-1)^2\times p + \pa{\frac{1}{1-p}-1}^2\times (1-p) }\\
&\stackrel{(*)}= \sum_j \ns{U[A_i]_j}\cdot O\pf{\ns{x}}{h_{i-1}}\cdot p\pa{1+\frac{p}{1-p}}\\
&\leq \ns{UA_i}_F\cdot O\pf{\ns{x}}{h_{\min}}\cdot p\\
&\leq O\pf{p\ns{U}\ns{A_i}_F\ns{x}}{h_{\min}},
\end{align*}
where inequality $(*)$ uses the assumption that $\n{x}_{\infty} = O\pf{\n{x}}{\sqrt{h_{i-1}}}$. Then, by the vector Bernstein inequality (Lemma~\ref{lm:vectorBernstein}), we know given $0<\delta<1$, there exists $\epsilon'=O\pa{\sqrt{\frac{p \log(mh_d/\delta)}{h_{\min}}} }$, with probability at least $1-\frac{\delta}{m}$, we have 
$$\|U (\hat{A}_i - A_i) x\| \leq \eps'\|A_i\|_F\|U\|\|x\|.$$

Taking the union bound over all $(U,x)$ pairs in $G$, we know that with probability at least $1-\delta$, for any $(U,x)\in G$, $\|U (\hat{A}_i- A_i) x\| \leq \eps'\|A_i\|_F\|U\|\|x\|.$ 

Suppose $h_{\min}=\Omega\pa{\frac{\log(1/\delta)}{p}}$; 
then by the Chernoff bound, we know with probability at least $1-\delta,$ the dropped out fraction is at least $\frac{2}{3}p$. Taking another union bound concludes our proof.
\end{proofof}

Now we are ready to prove Lemma~\ref{lem:dropout_crosslayer}. The idea is similar to \citep{arora2018stronger}, but we give the proof here for completeness.
\dropoutcrosslayer*

\begin{proofof}{Lemma~\ref{lem:dropout_crosslayer}}
We first bound the difference between the dropped out network $\theta_1$ and the original network $\theta$.

\paragraph{Bounding $\|f_\theta(x) - f_{\theta_1}(x)\|$:} We first show that with probability at least $1/2-\delta,$ $\|f_\theta(x) - f_{\theta_1}(x)\| =\|x^d - \hat x^d_d\| \le \eps'\|f_{\theta}(x)\|,$ where $\eps'$ will be specified later.
For any layer $i\geq 1$ and letting $\hat{x}^j_i$ be the vector before activation at layer $j$ if the weights $A_2,\dots,A_i$ are replaced by $\hat{A}_2,\dots,\hat{A}_i$.

According to Lemma~\ref{lem:dropout_layer}, for any layer $2\leq i\leq d$, given $0<\delta<1$, let $\epsilon'=O\pa{\sqrt{\frac{p c^2d^2\log(md h_d/\delta)}{h_{\min}\min\limits_{2\leq i\leq d}(\mu_i^2\icu^2) }}}$, with probability at least $1-\delta/d$ over $\hat{A}_i,$ we have 
\begin{equation}
    \|U (\hat{A}_i - A_i) x\| \leq \frac{\eps'\mu_i\icu}{6cd}\|A\|_F\|U\|\|x\|\label{eq:noise_layer}
\end{equation}
for any $(U,x)\in \{(J^{i,j}_{x^i},\phi(\hx^{i-1}_{i-1}))|x\in S,i\leq j\leq d\}$. By taking a union bound over $i$, we know inequality~(\ref{eq:noise_layer}) holds with probability at least $1-\delta$ for every $i.$ Recall that the interlayer smoothness holds with probability at least $1/2.$ Taking another union, we know with probability at least $1/2-\delta,$ interlayer smoothness holds and inequality~(\ref{eq:noise_layer}) holds for every $2\leq i\leq d.$ Next, conditioning on the success of these two events, we will inductively prove for any $1\leq i\leq d$, for any $i\leq j\leq d,$
	$$
	\|\hat{x}^j_i - x^j\|\le (i/d)\eps'\|x^j\|.
	$$
	
	For the base case $i = 1$, since we are not dropping out any weight matrix, the inequality is trivial. For any $1\leq i-1\leq d-1,$ suppose $\|\hat{x}^j_{i-1} - x^j\|\le \frac{i-1}{d}\eps'\|x^j\|$ for any $i-1\leq j\leq d$; we prove the induction hypothesis holds for layer $i.$
	
	For any $i\leq j\leq d$ we have
	\begin{equation*}
	\|\hat{x}^j_i - x^j\| = \|(\hat{x}^j_i - \hat{x}^j_{i-1})+(\hat{x}^j_{i-1} - x^j)\| \leq \|\hat{x}^j_i - \hat{x}^j_{i-1}\| +\|\hat{x}^j_{i-1} - x^j\|.
	\end{equation*}

By the induction hypothesis, we know the second term can be bounded by $(i-1)\eps'\|x^j\|/d$. Therefore, in order to complete the induction step, it suffices to show that the first term is bounded by $\eps'\|x^j\|/d$. For simplicity, we also denote $\hat{x}^{i-1}_{i-1}$ as $\hat{x}^{i-1}$.
Let $\Delta_i = \hat{A}_i - A_i$. 
We can decompose the error into two terms:
	\begin{align}
	\nonumber
	\|\hat{x}^j_i - \hat{x}^j_{i-1}\|
	& = \|M^{i,j}(\hat{A}_i \phi(\hat{x}^{i-1})) - M^{i,j}(A_i \phi(\hat{x}^{i-1}))\|\\
	\nonumber
	& = \|M^{i,j}(\hat{A}_i \phi(\hat{x}^{i-1})) - M^{i,j}(A_i \phi(\hat{x}^{i-1})) + J^{i,j}_{x^i}(\Delta_i \phi(\hat{x}^{i-1})) - J^{i,j}_{x^i}(\Delta_i \phi(\hat{x}^{i-1}))\| \\
	& \leq \|J^{i,j}_{x^i}(\Delta_i \phi(\hat{x}^{i-1}))\| + \|M^{i,j}(\hat{A}_i \phi(\hat{x}^{i-1})) - M^{i,j}(A_i \phi(\hat{x}^{i-1})) - J^{i,j}_{x^i}(\Delta_i \phi(\hat{x}^{i-1}))\| 
	\label{e:induct-error}
	\end{align}
	The first term of~\eqref{e:induct-error} can be bounded as follows:
	\begin{align}
	\nonumber
	& \|J^{i,j}_{x^i}\Delta_i \phi(\hat{x}^{i-1})\|\\
		\nonumber
	& \le (\eps'\mu_i\icu/6cd)\|J^{i,j}_{x^i}\|\|A_i\|_F\|\phi(\hat{x}^{i-1})\|&& \mbox{Lemma~\ref{lem:dropout_layer}}\\
		\nonumber
	& \le (\eps'\mu_i\icu/6cd)\|J^{i,j}_{x^i}\|\|A_i\|_F\|\hat{x}^{i-1}\|&& \mbox{$\phi$ (ReLU) is 1-Lipschitz}\\
		\nonumber
	&\le (\eps'\mu_i\icu/3cd)\|J^{i,j}_{x^i}\|\|A_i\|_F\|x^{i-1}\|&& \mbox{Induction hypothesis,}\\
	\nonumber
	&&&\ve{\hat{x}^{i-1}-x^{i-1}}\le \fc{(i-1)\epsilon' \ve{x^{i-1}}}d<\ve{x^{i-1}} \\
		\nonumber
	&\le (\eps'\mu_i\icu/3d)\|J^{i,j}_{x^i}\|\|A_i\|_F \|\phi(x^{i-1})\|&& \mbox{Activation Contraction} \\
		\nonumber
	&\le (\eps'\icu/3d)\|J^{i,j}_{x^i}\|\|A_i \phi(x^{i-1})\|&& \mbox{Layer Cushion} \\
		\nonumber
	& = (\eps'\icu/3d)\|J^{i,j}_{x^i}\|\|x^i\|&& x^i = A_i\phi(x^{i-1})\\
	\label{e:induct-error1}
	& \le (\eps'/3d) \|x^j\|&& \mbox{Interlayer Cushion} 
	\end{align}
	The second term of~\eqref{e:induct-error} can be bounded as:
	\begin{align}
	\nonumber
	& \|M^{i,j}(\hat{A}_i \phi(\hat{x}^{i-1})) - M^{i,j}(A_i \phi(\hat{x}^{i-1})) - J^{i,j}_{x^i}(\Delta_i\phi( \hat{x}^{i-1}))\| \\
	\nonumber
	& = \|(M^{i,j}-J^{i,j}_{x^i})(\hat{A}_i \phi(\hat{x}^{i-1})) - (M^{i,j}-J^{i,j}_{x^i})(A_i \phi(\hat{x}^{i-1}))\| \\
	\label{e:induct-error2}
	& \leq \|(M^{i,j}-J^{i,j}_{x^i})(\hat{A}_i \phi(\hat{x}^{i-1}))\| + \|(M^{i,j}-J^{i,j}_{x^i})(A_i \phi(\hat{x}^{i-1}))\|.
	\end{align}
Both terms of~\eqref{e:induct-error2} can be bounded using the interlayer smoothness condition. For the second term of~\eqref{e:induct-error2}, notice that $A_i\phi(\hat{x}^{i-1}) = \hat{x}^i_{i-1}$. Thus by the induction hypothesis, we know 
\begin{align}\|A_i\phi(\hat{x}^{i-1}) - x^i\|
= \|\hat{x}^i_{i-1} - x^i\| \le (i-1)\eps'\|x^i\|/d \le \eps'\|x^i\|.
\label{e:induct-xii1}
\end{align}
Now, by interlayer smoothness, 
\begin{align}
\nonumber
\n{(M^{i,j}-J^{i,j}_{x^i})(A_i \phi(\hat{x}^{i-1}))}
&=
\n{(M^{i,j}-J^{i,j}_{x^i})(x^i+(A_i\phi (\hat{x}^{i-1})-x^i))}\\
\nonumber
&\le \fc{\n{A_i\phi (\hat{x}^{i-1})-x^i}\n{x^j}}{\rho \n{x^i}}\\
\label{e:induct-error22}
&\stackrel{(*)}\le \fc{\epsilon'\n{x^i}\n{x^j}}{3d\n{x^i}} =\fc{\eps'\n{x^j}}{3d}
\end{align}
where in (*) we use~\eqref{e:induct-xii1} and the assumption $\rho\ge 3d.$
For the first term of~\eqref{e:induct-error2}, we know $\hat{A}_i\phi(\hat{x}^{i-1}) = \hat{x}^i_{i-1} + \Delta_i\phi(\hat{x}^{i-1})$. Therefore by the induction hypothesis and~\eqref{e:induct-error1} for $i=j$,
\begin{align*}\|\hat{A}_i\phi(\hat{x}^{i-1}) - x^i\| \le \|\hat{x}^i_{i-1} - x^i\| + \|\Delta_i\phi(\hat{x}^{i-1})\| \le (i-1)\eps'\n{x^i}/d+\eps'\n{x^i}/3d \le \eps'\n{x^i},
\end{align*}
so again we have 
\begin{align}\label{e:induct-error21}
\|(M^{i,j}-J^{i,j}_{x^i})(\hat{A}_i \phi(\hat{x}^{i-1}))\| \le (\eps'/3d)\|x^j\|.
\end{align}
Together,~\eqref{e:induct-error21} and~\eqref{e:induct-error22} show that~\eqref{e:induct-error2} is $\le \fc{2\eps'}{3d}\n{x^j}$. Together with~\eqref{e:induct-error1} we obtain from~\ref{e:induct-error} that $\|\hat{x}^j_i - \hat{x}^j_{i-1}\|\le \fc{\eps'}{d} \n{x^j}$, and hence that $\n{\hat x_i^j - x^j}\le \fc{i\eps' \ve{x^j}}{d}$, completing the induction step.

Conditioning on the success of interlayer smoothness and inequality~(\ref{eq:noise_layer}), we've shown,
	$$
	\|\hat{x}^j_i - x^j\|\le (i/d)\eps'\|x^j\|,
	$$
	for any $i\leq j\leq d$. Recall that with probability at least $1/2-\delta$, interlayer smoothness holds and inequality~(\ref{eq:noise_layer}) holds for every $2\leq i\leq d.$ Thus, let $\epsilon'=O\pa{\sqrt{\frac{p c^2d^2\log(md h_d/\delta)}{h_{\min}\min\limits_{2\leq i\leq d}\mu_i^2\icu^2 }}}$, we know with probability at least $1/2-\delta,$
	 $$\|f_\theta(x) - f_{\theta_1}(x)\| =\|x^d - \hat x^d_d\| \le \eps'\|f_{\theta}(x)\|.$$
	 
\paragraph{Bounding $\|f_\theta(x) - f_{\theta_t}(x)\|$ for any fixed $t$:}

 The proof for a fixed network on the path is almost the same as the proof for the end point. Instead of considering $\hat{x}_i^j$, now we consider $\hat{x}^j_i(t)$, which is the vector before activation at layer $j$ if the weights $A_2,\dots,A_i$ are replaced by $A_2+t(\hat{A}_2-A_2),\dots,A_i+t(\hat{A}_i-A_i)$. We can still use Lemma~\ref{lem:dropout_layer} to bound the noise produced by replacing the weight matrix at a single layer because 
$$\n{U(A_i+t(\hat{A}_i-A_i)-A_i)x}=t\n{U(\hat{A}_i-A_i)x}\leq \n{U(\hat{A}_i-A_i)x}.$$
Thus, we can still use the above induction proof to show that for any fixed $0\leq t\leq 1,$ let $\epsilon'=O\pa{\sqrt{\frac{p c^2d^2\log(md h_d/\delta)}{h_{\min}\min\limits_{2\leq i\leq d}\mu_i^2\icu^2 }}}$, with probability at least $1/2-\delta,$
	 $$\|f_\theta(x) - f_{\theta_t}(x)\| \le \eps'\|f_{\theta}(x)\|.$$

\paragraph{Bounding $\|f_\theta(x) - f_{\theta_t}(x)\|$ for every $t$:} Finally, we show that $\|f_\theta(x) - f_{\theta_t}(x)\|$ is bounded for every point on the path via an $\eps'$-net argument. Similar to previous steps, letting $\epsilon'=O\pa{\sqrt{\frac{p c^2d^2\max_{x\in S}(\ns{f_{\theta}(x)})\log(md h_d/\delta)}{h_{\min}\min\limits_{2\leq i\leq d}(\mu_i^2\icu^2)}}}$, we know that with probability at least $1/2-\delta,$
	 $$\|f_\theta(x) - f_{\theta_1}(x)\| \le \epsilon'/2.$$

Next, we show that on the path, the network output is smooth in terms of the parameters. According to Algorithm~\ref{alg:dropout}, we know for any $2\leq i\leq d$, we have $\n{\hat{A}_i}\leq 4\n{A_i}$, so $\n{\hat A_i-A_i}\le 5\ve{A_i}$. 
For any $2\le i\le d$, let $A_{i,t}=A_i + t(\hat A_i - A_i)$. Note $\n{A_{i,t}}\leq (1-t)\n{\hat{A}_i}+t\n{A_i} \le 4\n{A_i}$.
For any $t,t'$ and any $2\leq i\leq d$, let $\theta_{t,t'}^i$ be $\theta_{t}$ with the weight matrix at every layer $2\leq j\leq i$ replaced by $(A^j+t'(\hat{A}^j-A^j))$. For convenience, we also denote $\theta_t$ as $\theta_{t,t'}^1.$
Given $\tau<1/2,$ for any $\tau \leq t\leq 1-\tau$ and for any $-\tau \leq \kappa \leq \tau,$ we can bound $\n{f_{\theta_{t+\kappa}}(x)-f_{\theta_t}(x)}$ as follows:
\begin{align*}
\n{f_{\theta_{t+\kappa}}(x)-f_{\theta_t}(x)} &\leq \sum_{2\leq i\leq d}\n{f_{\theta^{i}_{t,t+\kappa}}(x)-f_{\theta_{t,t+\kappa}^{i-1}}(x)}
\end{align*}
The output of layer $i-1$ is the same for the two networks, of norm $\le \n{x}\prodo j{i-1} \n{A_{j,t+\kappa}}$. Hence the output of layer $i$ differs by at most $\kappa \n{x}\n{\hat A_i -A_i}\prodo j{i-1}\n{A_{j,t+\kappa}}$ and the output differs by $\kappa \n{x}\n{\hat A_i-A_i}\prodo j{i-1} \n{A_{j,t+\kappa}}\prod_{j=i+1}^d \n{A_{j,t}} \le 5^d\kappa\n{x} \prodo jd \n{A_j}$. Hence
\begin{align*}
\n{f_{\theta_{t+\alpha}}(x)-f_{\theta_t}(x)} 
&\leq \sum_{2\leq i\leq d}5^d\n{x}\kappa\prod_{1\leq j\leq d}\n{A_j}\\
&\leq 5^dd\kappa\n{x}\prod_{1\leq j\leq d}\n{A_i}.
\end{align*}

Thus, given $\tau\leq \frac{\epsilon'}{2\cdot 5^dd\max\limits_{x\in S}\n{x}\prod_{1\leq j\leq d}\n{A_j}},$ we know for any $\tau \leq t\leq 1-\tau$ and for any $-\tau \leq \alpha \leq \tau,$ 
\begin{equation}
\n{f_{\theta_{t+\alpha}}(x)-f_{\theta}(x)}\leq \epsilon'/2.\label{eq:epsnet}
\end{equation}

There exists a set $Q=\{\theta_t\}$ with size $O(1/\tau)$ such that for any network on the path, the distance to the closest network in $Q$ is no more than $\tau.$ If we can prove for any $\theta_t\in Q$, $\|f_\theta(x) - f_{\theta_t}(x)\| \le \epsilon'/2$, we immediately know for any network $\theta_{t'}$ on the path $\|f_\theta(x) - f_{\theta_{t'}}(x)\| \le \epsilon'$ by inequality~(\ref{eq:epsnet}).

By a union bound over $Q$, letting $\epsilon'=O\pa{\sqrt{\frac{p c^2d^2\max_{x\in S}(\ns{f_{\theta}(x)})\log\pf{md h_d}{\delta\tau}}{h_{\min}\min\limits_{2\leq i\leq d}(\mu_i^2\icu^2) }}}$, we know with probability at least $1/2-\delta,$
	 $$\|f_\theta(x) - f_{\theta_t}(x)\| \le \epsilon'/2,$$
	 for any $\theta_t\in Q$. 
	 
Setting $\delta=1/4$, we know there exists
$$\epsilon'=O\pa{\sqrt{\frac{pc^2d^3\max_{x\in S}(\ns{f_{\theta}(x)}) \log \pa{\frac{md h_d\max\limits_{x\in S}\n{x}\prod_{1\leq j\leq d}\n{A_j}}{ \epsilon'}}}{h_{\min}\min\limits_{2\leq i\leq d}(\mu_i^2\icu^2) }}}$$ such that with probability at least $1/4$, 
$$\|f_\theta(x) - f_{\theta_t}(x)\| \le \epsilon'$$
for any $x\in S$ and any $0\leq t\leq 1.$ Since the loss function is $\beta$-Lipschitz, we further know that for any $0\leq t\leq 1$:
	$$
	L(f_{\theta_t}) \le L(f_{\theta})+\beta\eps'= L(f_{\theta})+ \tdo(\sqrt{p}\epsilon).
	$$
\end{proofof}

Now, we are ready to prove the main theorem.
\noisestableconnect*
\begin{proofof}{Theorem~\ref{thm:noise_stable_connect}}
Setting dropout probability $p= 3/4$, by Lemma~\ref{lem:dropout_layer} and Lemma~\ref{lem:dropout_crosslayer}, if $h_{\min}=\tdomega\pa{1}$, we know there exist $\theta^A_1$ and $\theta^B_1$ such that
\begin{enumerate}
    \item in both networks, each weight matrix from layer $2$ to layer $d$ has at least half of columns as zero vectors;
    \item $L(f_{\theta^A_t})\leq L(f_{\theta^A})+\tdo(\epsilon)$ and  $L(f_{\theta^B_t})\leq L(f_{\theta^B})+\tdo(\epsilon)$, for any $0\leq t\leq 1$, where $\theta^A_t=\theta^A+t(\theta^A_1-\theta^A)$ and $\theta^B_t=\theta^B+t(\theta^B_1-\theta^B)$.
\end{enumerate}

Since the dropout fraction in both $\theta^A_1$ and $\theta^B_1$ is at least half, we can connect $\theta^A_1$ and $\theta^B_1$ as we did in Lemma~\ref{lem:drop-connect}, while ensuring the loss doesn't exceed 
$\max \{L(f_{\theta^A}),L(f_{\theta^B})\}+\tdo(\epsilon)$. Connecting $\theta^A$ to $\theta^A_1$ and connecting $\theta_B$ to $\theta^B_1$ each take one line segment. By the construction in Lemma~\ref{lem:drop-connect}, connecting two dropped-out networks $\theta^A_1$ and $\theta^B_1$ takes $8$ line segments. Thus, overall the path between $\theta^A$ and $\theta^B$ contains $10$ line segments.
\end{proofof}

Next, we show that if there exists a ``narrow'' neural network achiving small loss, we can get a lower energy barrier using a smaller dropout probability. 
\teacherstudent*

\begin{proofof}{Theorem~\ref{cor:teacher_student}}
Since $h_{\min}\cdot \max_{1\leq i\leq d-1}(h_i^*/h_i)\geq h_{\min}^* =\tdomega(1),$ we have $h_{min}=\tdomega\pf{1}{\max_{1\leq i\leq d-1}(h_i^*/h_i)}.$ By Lemma~\ref{lem:dropout_layer} and Lemma~\ref{lem:dropout_crosslayer}, there exist $\theta^A_1$ and $\theta^B_1$ such that
\begin{enumerate}
    \item in both networks, each weight matrix from layer $2$ to layer $d$ has at least $h_i^*$ columns set to zero;
    \item $L(f_{\theta^A_t})\leq L(f_{\theta^A})+\tdo(\sqrt{p}\epsilon)$ and  $L(f_{\theta^B_t})\leq L(f_{\theta^B})+\tdo(\sqrt{p}\epsilon)$, for any $0\leq t\leq 1$, where $\theta^A_t=\theta^A+t(\theta^A_1-\theta^A)$ and $\theta^B_t=\theta^B+t(\theta^B_1-\theta^B)$.
\end{enumerate}

From the fact that at least $h_i^*$ units in layer $i$ of both $\theta^A_1$ and $\theta^B_1$ have been set to zero for $1 \leq i < d$---meaning that the corresponding rows of $A_i$ and columns of $A_{i+1}$ are zero---it follows from Lemma~\ref{lem:drop-connect} that we can connect $\theta^A_1$ to an arbitrary permutation of $\theta^*$ using $8$ segments while keeping the loss on the path no more than $\max\{L(f_{\theta^A_1}),L(f_{\theta^*})\}.$ By choosing this permutation so that the non-zero units of $\theta^*$ do not intersect with those of $\theta^B_1$, we can then connect $\theta^*$ to $\theta^B_1$ using just $3$ segments as done in the first step of our path construction in Lemma~\ref{lem:drop-connect} seeing as there is no need to permute $\theta^*$ a second time. Combining these paths together with the paths that interpolate between the original parameters $\theta^A$ and $\theta^B$ and their dropout versions $\theta^A_1$ and $\theta^B_1$, we obtain a path in parameter space $\pi : [0,1] \to \Theta$ between $\theta^A$ and $\theta^B$ with $13$ line segments such that $L(f_{\pi(t)}) \leq \max \{L(f_{\theta^A})+\tdo(\sqrt{p}\epsilon),L(f_{\theta^B})+\tdo(\sqrt{p}\epsilon), L(f_{\theta^*})\}$ for $0 \leq t \leq 1$.
\end{proofof}

\section{Proofs for disconnected modes in two-layer nets}
\label{sec:example}
\begin{proofof}{Theorem~\ref{thm:counterexample}}
Define our loss over parameter space such that $L(f_\theta) = \frac{1}{n}\sum l(y_i,f_\theta(\xx_i))$, where $\xx_i \in \RR^{h+2}$ is our $i^{\text{th}}$ data sample, $y_i \in \RR$ the associated label, and $f_\theta(\xx_i) = \ww^T \phi ( A \xx_i )$ for $\theta = (\ww,A) \in \R^{(h+2) \times h} \times \R^{h}$. We can represent the data samples as rows in a matrix $X \in \RR^{n \times (h+2)}$---with $\mathbf{f}_i$ denoting the $i^{\text{th}}$ ``feature'' (i.e. column) of $X$---and the labels as elements of $\mathbf y \in \RR^{n \times 1}$, as illustrated in Figure~\ref{fig:counterexample}.

Choose $k,l,m,n$ such that $k < l < m < n$ where $k > h$, $l - k > h$, $m-l > 2$ and $n-m > h$.  

When $i \leq l$, let 
\begin{align*}
    x_{i,j} &= \begin{cases}
        i,&j=1\\
        i-1,&j=2\\
        1,&i\equiv j\pmod h\\
        -1,&i\nequiv j \pmod h,\, i\le l\\
        0,&i\nequiv j \pmod h,\, k<i\le l.
    \end{cases}
\end{align*}

When $l<i\le m$, let 
\begin{align*}
    x_{i,j} &= \begin{cases}
        -1, &j\le 2,\, i\equiv j\pmod 2\\
        0, &j\le 2,\, i\nequiv j \pmod 2\\
        0, &j>2,\, l<i\le m.
    \end{cases}
\end{align*}

When $i>m$, let
\begin{align*}
    x_{i,j} &=\begin{cases}
        0, &j\le 2\\
        -1, &j>2,\, i\equiv j\pmod h\\
        0, &j>2,\, i\nequiv j\pmod h.
    \end{cases}
\end{align*}

Finally, let $y_i = 1$ when $i \leq l$ and $0$ otherwise. 

\begin{figure}[H]
$$%
X = \begin{blockarray}{ccccccc}
 & \mathbf{f}_1 & \mathbf{f}_2 & \mathbf{f}_3 & \mathbf{f}_4 & \dots & \mathbf{f}_{h+2} \\
\begin{block}{c[cc|cccc]}
\xx_1 & 1 & 0 & 1 & -1 & \cdots & -1 \bigstrut[t] \\
\xx_2 & 2 & 1 & -1 & 1 & \ddots & \vdots \\
\vdots & \vdots & \vdots & \vdots & \ddots & \ddots & -1 \\
\vdots & \vdots & \vdots & -1 & \cdots & -1 & 1 \\
\xx_k & \vdots & \vdots & \vdots & \vdots & \vdots & \vdots \\
\cmidrule{4-7}
\vdots & \vdots & \vdots & \BAmulticolumn{4}{c}{\multirow{2}{*}{$I_h$}}\\
\xx_l & l & l-1 & \vdots & \ddots & \ddots & \ddots \\
\cmidrule{2-7}
 \vdots & \BAmulticolumn{2}{c|}{\multirow{2}{*}{$-I_2$}} & \BAmulticolumn{4}{c}{\multirow{2}{*}{$0$}}\\
 \vdots & & & & & & \\
 \xx_m & \vdots & \ddots & \vdots & \vdots & \vdots & \vdots \\
 \cmidrule{2-7}
 \vdots & \BAmulticolumn{2}{c|}{\multirow{2}{*}{$0$}} & \BAmulticolumn{4}{c}{\multirow{2}{*}{$-I_{h}$}} \\
 \vdots & & & & & & \\
\xx_n & \vdots & \vdots & \vdots & \ddots & \ddots & \ddots \bigstrut[b]\\
\end{block}
\end{blockarray}\vspace*{-1.25\baselineskip}
\quad \mathbf y = \begin{blockarray}{cc}
\\
\begin{block}{c[c]}
y_1 & 1 \bigstrut[t] \\
\vdots & \vdots \\
\vdots & \vdots \\
\vdots & \vdots \\
\vdots & \vdots \\
\vdots & \vdots \\
y_l & \vdots \\
\cmidrule{2-2}
\vdots & 0 \\
\vdots & \vdots \\
\vdots & \vdots \\
\vdots & \vdots \\
\vdots & \vdots \\
y_n & \vdots \bigstrut[b] \\
\end{block}
\end{blockarray}\vspace*{-1.25\baselineskip}
$$
\vspace{0.5cm}
\caption{Our dataset.}
\label{fig:counterexample}
\end{figure}

From the fact that $\phi(\mathbf{f}_1) - \phi(\mathbf{f}_2) = \sum_{j=3}^{h+2} \phi(\mathbf{f}_j) = \mathbf y$ 
it follows that there exist networks with both two active hidden units and $h$ active hidden units that achieve minimal loss, with the former corresponding to the ground truth teacher network which generated our dataset. 

Note in particular that the output layer weight corresponding to $\phi(\mathbf{f}_2)$ in the network with two active hidden units is negative, whereas in the network with $h$ active hidden units the output layer weights are all positive. Thus, any path between the two networks must pass through a point in parameter space where at least one output layer weight is zero while the other $h-1$ are positive. However, as shown in Lemma \ref{lem:emp-counterexample}, there does not exist such a point in parameter space that achieves minimal loss. It follows that there exists a barrier in the loss landscape separating the original networks, both of which are global minima. Moreover, by adjusting $k$, $l$, and $m$ we can somewhat arbitrarily raise or lower this barrier.
\end{proofof}

\begin{lemma}\label{lem:emp-counterexample}
There does not exist a set of $h-1$ positive weights $w_i$ and vectors $\hh_i \in \spn X$ such that $\sum_{i=1}^{h-1} w_i \phi(\hh_i) = \mathbf y$.
\end{lemma}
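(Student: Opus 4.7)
\textbf{Proof Plan for Lemma~\ref{lem:emp-counterexample}.}

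The plan is to argue by contradiction: suppose there exist $w_i>0$ and $\hh_i\in\spn X$ with $\sum_{i=1}^{h-1}w_i\phi(\hh_i)=\mathbf y$. Since every $\hh_i$ lies in the column span, write $\hh_i=\sum_{j=1}^{h+2}a_j^{(i)}\mathbf f_j$. First I would exploit the rows with label $0$. The rows $l<r\le m$ contain a single entry equal to $-1$ in column $1$ or $2$ (and zeros elsewhere), and the rows $r>m$ contain a single entry equal to $-1$ in some column $j\in\{3,\dots,h+2\}$; since $m-l>2$ and $n-m>h$, every column $j\in[h+2]$ appears as the location of such a $-1$ for some row. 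At such a row $r$, $(\hh_i)_r=-a_{j(r)}^{(i)}$, so $\phi(\hh_i)_r=\max(0,-a_{j(r)}^{(i)})$, and the equation $\sum_iw_i\phi(\hh_i)_r=0$ together with $w_i>0$ forces $a_j^{(i)}\ge 0$ for every $i,j$.

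Next I would use the middle block of positive-label rows, $k<r\le l$. There, $(\hh_i)_r=ra_1^{(i)}+(r-1)a_2^{(i)}+a_{j^*(r)}^{(i)}$ with $j^*(r)\in\{3,\dots,h+2\}$ satisfying $j^*(r)\equiv r\pmod h$; every term is nonnegative, so $\phi$ is the identity. Define aggregates $W_j:=\sum_iw_ia_j^{(i)}$. The label $y_r=1$ becomes $rW_1+(r-1)W_2+W_{j^*(r)}=1$. Because $l-k>h$, two distinct rows in this block share the same $j^*$ but differ in $r$ by $h$; subtracting gives $h(W_1+W_2)=0$, and since $W_1,W_2\ge 0$ this forces $W_1=W_2=0$ and hence (by positivity of weights) $a_1^{(i)}=a_2^{(i)}=0$ for every $i$. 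The surviving equation then reads $W_{j^*(r)}=1$, and as $r$ ranges over $l-k>h$ consecutive integers every $j^*\in\{3,\dots,h+2\}$ is hit, giving $W_j=1$ for all $j\in\{3,\dots,h+2\}$.

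Now I turn to the first block of rows $r\le k$. With $a_1^{(i)}=a_2^{(i)}=0$ already established, $(\hh_i)_r=a_{j^*(r)}^{(i)}-\sum_{j\in\{3,\dots,h+2\}\setminus\{j^*(r)\}}a_j^{(i)}=2a_{j^*(r)}^{(i)}-S_i$, where $S_i:=\sum_{j=3}^{h+2}a_j^{(i)}$. Since $k>h$, every $j^*\in\{3,\dots,h+2\}$ appears as $j^*(r)$ for some $r\le k$, yielding the equation $\sum_iw_i\max(0,2a_{j^*}^{(i)}-S_i)=1$. Comparing with $\sum_iw_ia_{j^*}^{(i)}=W_{j^*}=1$, subtracting gives $\sum_iw_i\bigl[a_{j^*}^{(i)}-\max(0,2a_{j^*}^{(i)}-S_i)\bigr]=0$. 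The bracketed quantity equals $\min\bigl(a_{j^*}^{(i)},S_i-a_{j^*}^{(i)}\bigr)$, which is nonnegative; with $w_i>0$ each term must vanish. Hence for every $i,j^*$ either $a_{j^*}^{(i)}=0$ or $a_j^{(i)}=0$ for all $j\in\{3,\dots,h+2\}\setminus\{j^*\}$. Consequently every nonzero $\hh_i$ is a nonnegative multiple of a single feature $\mathbf f_{j_i}$ with $j_i\in\{3,\dots,h+2\}$.

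The contradiction is now pigeonhole: each of the $h$ aggregates $W_j=1$ for $j\in\{3,\dots,h+2\}$ must be fed by some hidden unit $\hh_i$ with $j_i=j$, but there are only $h-1$ units and each contributes to at most one $W_j$. I expect the main obstacle to be just bookkeeping the piecewise definition of $X$ and verifying that the size conditions $k>h$, $l-k>h$, $m-l>2$, $n-m>h$ are each used exactly where needed (for coverage of residues mod $h$ in the respective blocks), rather than any deep analytic step.
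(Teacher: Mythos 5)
Your proposal is correct and follows essentially the same route as the paper's proof: force nonnegativity of all coefficients $a_j^{(i)}$ from the zero-label rows, rule out $\mathbf{f}_1$ and $\mathbf{f}_2$ using the constancy of $\mathbf y$ over the first $l$ rows, and conclude via pigeonhole on the $h$ remaining features versus the $h-1$ units. Your execution is somewhat more quantitative---introducing the aggregates $W_j$ and the explicit $\min$ identity where the paper argues qualitatively via monotonicity and a block comparison on a unit with two nonzero coefficients---but the underlying ideas coincide.
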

\begin{proof}
We can think of each $\hh_i$ as the output a particular hidden unit over all $n$ samples in our dataset and $w_i$ as the output layer weight associated to this hidden unit. We then have $\hh_i = \sum a_{i,j} \mathbf{f}_j$, where the coefficients $a_{i,j}$ are elements of $A$.

First, if there did exist $w_i$ and $\hh_i$ such that $\sum_{i=1}^{h-1} w_i \phi(\hh_i) = \mathbf y$, then it must be the case for all $i$ that $\hh_i = \sum a_{i,j} \mathbf{f}_j$ where $a_{i,j} \ge  0$ for all $j$. Otherwise, there would be non-zero elements in some $\hh_i$ between indexes $l+1$ and $n$ that would be impossible to eliminate in $\sum_{i=1}^{h-1} w_i \phi(\hh_i)$ given that $w_i > 0$ for all $i$.

Second, any linear combination of $\mathbf{f}_1$ and $\mathbf{f}_2$ with positive coefficients would result in a vector whose first $l$ elements are positive and increasing. In contrast, the first $l$ elements of $Y$ are constant. And so from the fact that there does not exist $a_{i,j} > 0$ such that the first $l$ elements of $\sum a_{i,j} \mathbf{f}_j$ are decreasing---in particular because the first $k$ elements and next $l-k$ elements of $\sum_{j=3}^{h+2} a_{ij}x_j$ are periodic with length $h$---it follows that $a_{i,1},a_{i,2} = 0$ for all $\hh_i$.


Thus, we need only consider linear combinations of $\mathbf{f}_3$ through $\mathbf{f}_{h+2}$ with positive coefficients as candidates for $\hh_i$. To this end, note that if a particular $\mathbf{f}_j$ has zero coefficient in all of $\hh_1$ through $\hh_{h-1}$, then $\sum_{i=1}^{h-1} w_i \phi(\hh_i)$ will have zeros in every index congruent to $j \mod h$ and therefore cannot equal $\mathbf y$. Hence by the pigeonhole principle, in order to have $\sum_{i=1}^{h-1} w_i \phi(\hh_i) = \mathbf y$ there must be some $i$ such that $\hh_i = \sum_{j=3}^{h+2} a_{i,j} \mathbf{f}_j$ with at least two coefficients being non-zero. However, in any linear combination $\sum_{j=3}^{h+2} a_{i,j} \mathbf{f}_j$ where $a_{i,j},a_{i,j'} > 0$ for at least two distinct $j,j'$, the elements in indexes $k+1$ to $l$ will be greater than the elements in indexes $1$ to $k$ that are congruent to $j \mod h$ and $j' \mod h$. In contrast, the first $l$ elements of $\mathbf y$ are constant. Hence, similar to the case of $\mathbf{f}_1$ and $\mathbf{f}_2$, there cannot exist $\hh_i = \sum_{j=3}^{h+2} a_{i,j} \mathbf{f}_j$ and positive coefficients $w_i$ such that $\sum_{i=1}^{h-1} w_i \phi(\hh_i) = Y$.
\end{proof}

\section{Experimental details and further results}
\label{sec:extra-experiments}

\subsection{Experimental details and hyperparameters}\label{appdix:hyperparameters}

For all experiments on MNIST, we used a convolutional architecture consisting of $3$ convolutional layers followed by a fully-connected output layer. Each convolutional layer consisted of $32$ $3 \times 3$ filters and used sufficient padding so as to keep the layer's output the same shape as its input. All networks were trained on an NVIDIA Tesla K20c GPU for $5000$ iterations with a batch size of $64$ using stochastic gradient descent with an initial learning rate of $0.1$ and a decay rate of $1\text{E}^{-6}$. No significant hyperparameter tuning was applied. Images were normalized.

For the left and right plots in Figure~\ref{fig:mnist}, we report results averaged over $5$ random trials and error bars corresponding to the standard deviation over these trials. For the center plot we simply computed the loss and accuracy over a linear path between a particular convolutional net and a single dropout version of itself. Specific to Figure~\ref{fig:mnist}, in applying dropout with probability $p$ we randomly sample a subset of $\lfloor 32 (1-p) \rfloor$ units and rescale these units by $1/(1-p)$ while setting the remaining units to zero. In the left plot, each trial consisted of sampling $20$ such dropout networks and reporting the performance of the network achieving the lowest loss. Losses and accuracies in all plots were computed on a random batch of $4096$ training images.

On CIFAR-10, we trained VGG-11 networks on an NVIDIA Titan X GPU for 300 epochs with SGD with a batch size of 128, with weight
decay 5e-4, momentum 0.9, and an initial learning rate of 0.05 which is decayed by factor of 2 every 30 epochs. We used channel-wise dropout at all convolutional layers. The dropout rates are $p=0.25$ at the first three layers and are $p=0.5$ at the others. Ordinary dropout with $p=0.5$ is used at every fully-connected layers except for the last one (the softmax layer).

\pagebreak

\subsection{Straight interpolation between two models}
\label{sec:direct_interpolation}
As demonstrated in Figure~\ref{fig:direct_interpolation}, a straight line interpolation between two noise stable model may incur \emph{large} losses and \emph{poor} accuracies. The models are the same as used in Figure~\ref{fig:properties}.
\hspace{-4cm}
\begin{figure}[H]
	\centering
	\includegraphics[width=7cm]{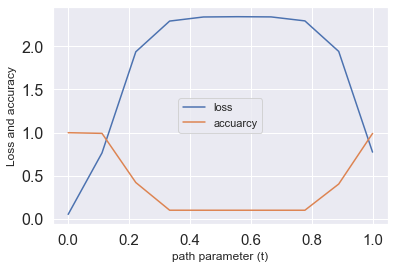}
	\caption{Loss and accuracy from directly interpolating between two noise stable models.}
	\label{fig:direct_interpolation}
\end{figure}
\subsection{Verification of noise stability conditions}\label{appdix:verification}


\subsubsection{Layer cushion}

\begin{figure}[H]
    \centering
    \def\mydata{
    1,2,3,4,5,6,7,8}
    \foreach \x in \mydata
    {
    \begin{subfigure}[t]{0.3\textwidth}
        \centering
        \includegraphics[height=2.9cm, width=3.5cm]{figures/cushion_layer_\x.pdf}
    \end{subfigure}
    }
\end{figure}

\subsubsection{Interlayer cushion}\label{appendix:interlayer-cushion}

\begin{figure}[H]
    \centering
    \def\mydata{
    1,2,3,4,5,6,7,8}
    \foreach \x in \mydata
    {
    \begin{subfigure}[t]{0.3\textwidth}
        \centering
        \includegraphics[height=2.9cm, width=3.5cm]{figures/interlayer_cushion_layer_\x.pdf}
    \end{subfigure}
    }
\end{figure}

\subsubsection{Activation contraction}

\begin{figure}[H]
    \centering
    \def\mydata{
    1,2,3,4,5,6,7,8}
    \foreach \x in \mydata
    {
    \begin{subfigure}[t]{0.3\textwidth}
        \centering
        \includegraphics[height=2.9cm, width=3.5cm]{figures/contraction_layer_\x.pdf}
    \end{subfigure}
    }
\end{figure}

\subsubsection{Interlayer smoothness}

\begin{figure}[H]
    \centering
    \def\mydata{
    2,3,4,5,6,7,8}
    \foreach \x in \mydata
    {
    \begin{subfigure}[t]{0.3\textwidth}
        \centering
        \includegraphics[height=2.9cm, width=3.5cm]{figures/smoothness_layer_\x.pdf}
    \end{subfigure}
    }
\end{figure}

\section{Tools}
We use matrix concentration bounds to bound the noise produced by dropping out one single layer (Lemma~\ref{lem:dropout_layer}).

\begin{lemma}[Matrix Bernstein; Theorem 1.6 in \citep{tropp2012user}]\label{lm:bernstein_original}
Consider a finite sequence $\{Z_k\}$ of independent, random matrices with dimension $d_1\times d_2$. Assume that each random matrix satisfies
$$\E[Z_k]=0\ and \ \n{Z_k}\leq R \ almost \ surely. $$
Define 
$$\sigma^2:= \max\Big\{\big\lVert\sum_k \E[Z_k Z_k^*]\big\rVert, \big\lVert \sum_k \E[Z_k^* Z_k]\big\rVert \Big\}.$$
Then, for all $t\geq 0$, 
$$\Pr\Big\{\big\lVert\sum_k Z_k\big\rVert\geq t \Big\}\leq (d_1+d_2) \exp\Big(\frac{-t^2/2}{\sigma^2+Rt/3}\Big).$$
\end{lemma}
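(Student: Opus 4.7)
The plan is to prove this matrix Bernstein inequality via the matrix Laplace transform method of Ahlswede--Winter / Tropp. First I would reduce to the Hermitian case using the Hermitian dilation: for each $Z_k$, define the $(d_1+d_2)\times(d_1+d_2)$ Hermitian matrix $Y_k = \begin{pmatrix} 0 & Z_k \\ Z_k^* & 0 \end{pmatrix}$. This dilation preserves operator norm, so $\|\sum_k Z_k\| = \lambda_{\max}(\sum_k Y_k)$, and since $Y_k^2 = \operatorname{diag}(Z_k Z_k^*,\, Z_k^* Z_k)$, the variance parameter $\sigma^2$ defined in the lemma exactly controls $\|\sum_k \E Y_k^2\|$. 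The dimension $d_1+d_2$ in the final bound will emerge naturally as the size of the dilated space.

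Next I would apply the matrix Markov/Chernoff bound: for any $\theta > 0$,
$$\Pr\{\lambda_{\max}(\textstyle\sum_k Y_k) \geq t\} \leq e^{-\theta t}\, \E \operatorname{tr} \exp\left(\theta \textstyle\sum_k Y_k\right).$$
The trace exponential does not factor over independent summands as in the scalar case, so the key step is Lieb's concavity theorem, which yields subadditivity of the matrix cumulant generating function:
$$\E \operatorname{tr} \exp\left(\textstyle\sum_k \theta Y_k\right) \leq \operatorname{tr} \exp\left(\textstyle\sum_k \log \E e^{\theta Y_k}\right).$$
I would then bound each $\E e^{\theta Y_k}$ in the semidefinite order by expanding the matrix exponential and using $\E Y_k = 0$ together with the operator inequality $Y_k^j \preceq R^{j-2} Y_k^2$ (which follows from $\|Y_k\| \leq R$). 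Summing the geometric tail gives, for $0 < \theta < 3/R$,
$$\E e^{\theta Y_k} \preceq I + g(\theta)\, \E Y_k^2, \qquad g(\theta) = \frac{\theta^2/2}{1 - \theta R/3},$$
and hence $\log \E e^{\theta Y_k} \preceq g(\theta)\, \E Y_k^2$ by operator monotonicity of $\log(I+\cdot\,)$.

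Plugging this back into the Lieb bound and using $\operatorname{tr} \exp(H) \leq (d_1+d_2)\, e^{\lambda_{\max}(H)}$ yields
$$\Pr\{\|\textstyle\sum_k Z_k\| \geq t\} \leq (d_1+d_2)\, \exp\!\left(-\theta t + g(\theta)\sigma^2\right),$$
and the choice $\theta = t/(\sigma^2 + Rt/3)$ optimizes the right-hand side and produces the stated inequality. The main obstacle is the operator-theoretic bookkeeping: Lieb's concavity theorem is the nontrivial replacement for the scalar identity $\E e^{\theta(X+Y)} = \E e^{\theta X}\E e^{\theta Y}$, and the MGF bound must be established in the Loewner order rather than pointwise, which requires careful use of operator monotonicity and of the semidefinite bound $Y_k^j \preceq R^{j-2} Y_k^2$ that transfers the scalar Taylor-series argument to the matrix setting.
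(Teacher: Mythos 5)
The paper does not prove this lemma; it imports it verbatim as Theorem 1.6 of \citet{tropp2012user}, so there is no in-paper argument to compare against. Your outline is a correct and faithful reproduction of Tropp's own proof --- Hermitian dilation to reduce to the symmetric case, the matrix Laplace transform bound, Lieb's concavity theorem for subadditivity of the matrix cumulant generating function, the semidefinite MGF bound $\E e^{\theta Y_k} \preceq I + g(\theta)\,\E Y_k^2$ with $g(\theta) = (\theta^2/2)/(1-\theta R/3)$, and the choice $\theta = t/(\sigma^2 + Rt/3)$ --- and all the steps check out, including the emergence of the dimensional factor $d_1+d_2$ from the trace bound on the dilated space.
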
 

As a corollary, we have:
\begin{lemma}[Bernstein Inequality: Vector Case]\label{lm:vectorBernstein}
Consider a finite sequence $\{v_k\}$ of independent, random vectors with dimension $d$. Assume that each random vector satisfies
$$\n{v_k-\E[v_k]}\leq R\ almost\ surely.$$
Define
$$\sigma^2 := \sum_{k}\E\big[\ns{v_k-\E[v_k]}\big].$$
Then, for all $t\geq 0$,
$$\Pr\Big\{\n{\sum_{k}(v_k-\E[v_k])}\geq t\Big\}\leq (d+1)\cdot \exp\Big(\frac{-t^2/2}{\sigma^2+Rt/3}\Big).$$
\end{lemma}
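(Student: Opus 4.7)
The plan is to derive this vector Bernstein inequality as a direct corollary of the matrix Bernstein inequality (Lemma~\ref{lm:bernstein_original}) by viewing each centered vector as a $d \times 1$ rectangular matrix. First I would set $w_k := v_k - \E[v_k]$, so that $\{w_k\}$ is a sequence of independent, mean-zero random vectors in $\R^d$ with $\|w_k\| \le R$ almost surely, and the quantity $\sigma^2$ in the statement equals $\sum_k \E[\|w_k\|^2]$. Then I would reinterpret each $w_k$ as a $d \times 1$ matrix $Z_k$ and verify the hypotheses of Lemma~\ref{lm:bernstein_original} with $d_1 = d$, $d_2 = 1$. The boundedness condition $\|Z_k\| = \|w_k\| \le R$ is immediate.

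Next I would check that the matrix-Bernstein variance parameter
$$\max\Big\{\big\|\sum_k \E[Z_k Z_k^*]\big\|, \big\|\sum_k \E[Z_k^* Z_k]\big\|\Big\}$$
is at most $\sigma^2$. The second term is easy: $Z_k^* Z_k = w_k^\top w_k = \|w_k\|^2$ is a scalar, so its sum in expectation is exactly $\sigma^2$. For the first term, $Z_k Z_k^* = w_k w_k^\top$ is a PSD $d \times d$ matrix, and since the operator norm of a PSD matrix is bounded by its trace,
$$\big\|\sum_k \E[w_k w_k^\top]\big\| \le \operatorname{tr}\Big(\sum_k \E[w_k w_k^\top]\Big) = \sum_k \E[\|w_k\|^2] = \sigma^2.$$
Hence the variance parameter is bounded by $\sigma^2$, and applying Lemma~\ref{lm:bernstein_original} directly yields
$$\Pr\Big\{\big\|\sum_k Z_k\big\| \ge t\Big\} \le (d+1) \exp\Big(\frac{-t^2/2}{\sigma^2 + Rt/3}\Big),$$
which, upon observing $\|\sum_k Z_k\| = \|\sum_k w_k\|$ (the operator norm of a one-column matrix equals the Euclidean norm of the column), gives exactly the inequality claimed.

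There is no real obstacle here, since the proof is essentially a dimensional specialization. The only minor subtlety is the PSD trace argument used to bound $\|\sum_k \E[w_k w_k^\top]\|$ by $\sigma^2$; if one tried to bound it more naively one might lose a factor, but passing through the trace (which equals the scalar expectation one already needs) makes the two sides of the $\max$ align with the desired $\sigma^2$. The factor $(d+1)$ on the right-hand side of the conclusion is simply $d_1 + d_2 = d + 1$ from Lemma~\ref{lm:bernstein_original}.
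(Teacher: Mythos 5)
Your proof is correct and matches the paper's (implicit) argument: the paper states this lemma as a direct corollary of the matrix Bernstein inequality without writing out the details, and your specialization to $d\times 1$ matrices, with the trace bound $\|\sum_k \E[w_k w_k^{\top}]\| \le \operatorname{tr}(\sum_k \E[w_k w_k^{\top}]) = \sigma^2$, is exactly the intended derivation. No issues.
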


\end{document}